\newcommand{\beq}{\vspace{0mm}\begin{equation}}
\newcommand{\eeq}{\vspace{0mm}\end{equation}}
\newcommand{\beqs}{\vspace{0mm}\begin{eqnarray}}
\newcommand{\eeqs}{\vspace{0mm}\end{eqnarray}}
\newcommand{\barr}{\begin{array}}
\newcommand{\earr}{\end{array}}
\newcommand{\Bmat}[0]{{{\bf B}}}
\newcommand{\wv}{\boldsymbol{w}}
\newcommand{\xv}{\boldsymbol{x}}
\newcommand{\cdotv}{\boldsymbol{\cdot}}
\newcommand{\Sigmamat}[0]{{\boldsymbol{\Sigma}}}
\newcommand{\betav}[0]{{\boldsymbol{\beta}}}
\newcommand{\muv}[0]{{\boldsymbol{\mu}}}
\newcommand{\phiv}{\boldsymbol{\phi}}
\newcommand{\E}{\mathbb{E}}
\newcommand{\given}{\,|\,}
\newtheorem{thm}{Theorem} 
\begin{document}
\title{
Parsimonious Bayesian deep networks
}
\author{
 Mingyuan Zhou\\
 Department of IROM, 
 McCombs School of Business\\
The University of Texas at Austin, Austin, TX 78712 \\
 \texttt{mingyuan.zhou@mccombs.utexas.edu} \\
}

\maketitle

\begin{abstract}
Combining Bayesian nonparametrics and a forward model selection strategy, we construct parsimonious Bayesian deep networks (PBDNs) that infer capacity-regularized network architectures from the data and require neither cross-validation nor fine-tuning when training the model. One of the two essential components of a PBDN is the development of a special infinite-wide single-hidden-layer neural network, whose number of active hidden units can be inferred from the data. The other one is the construction of a greedy layer-wise learning algorithm that uses a forward model selection criterion to determine when to stop adding another hidden layer. We develop both Gibbs sampling and stochastic gradient descent based maximum a posteriori inference for PBDNs, providing state-of-the-art classification accuracy and interpretable data subtypes near the decision boundaries, while maintaining low computational complexity for out-of-sample prediction. 

 \end{abstract}

\section{ {Introduction}}\label{sec:introduction}

To separate two linearly separable classes, a simple linear classifier such logistic regression will often suffice, in which scenario adding the capability to model nonlinearity not only complicates the model and increases computation, but also often harms rather improves the performance by increasing the risk of overfitting.
On the other hand, for two classes not well separated by a single hyperplane, a linear classifier is often inadequate, and hence it is common to use either kernel support vector machines \cite{vapnik1998statistical, scholkopf1999advances} or deep neural networks \cite{hinton2006fast, lecun2015deep,Goodfellow-et-al-2016} to nonlinearly transform the covariates, making the two classes become more linearly separable in the transformed covariate space. 
While being able to achieve high classification accuracy, they both have clear limitations. For a kernel based classifier, its number of support vectors often increases linearly in the size of training data \cite{steinwart2003sparseness}, making it not only computationally expensive and memory inefficient to train for big data, but also slow in out-of-sample predictions. A deep neural network could be scalable with an appropriate network structure, but it is often cumbersome to tune 
 the network depth (number of layers) and width (number of hidden units) of each hidden layer~\cite{Goodfellow-et-al-2016}, and has the danger of overfitting the training data if a deep neural network, which is often equipped with a larger than necessary modeling capacity, is not carefully regularized. 
 
 Rather than making an uneasy choice in the first place between a linear classifier, which has fast computation and resists overfitting but may not provide sufficient class separation, 
 and an over-capacitized model, which often wastes computation and requires careful regularization to prevent overfitting, we propose a parsimonious Bayesian deep network (PBDN) 
 that builds its capacity regularization into the greedy-layer-wise construction and training of the deep network. 
 More specifically, we
 transform the covariates in a layer-wise manner, with each layer of transformation designed to facilitate class separation via the use of the noisy-OR interactions of multiple weighted linear hyperplanes. 
 Related to kernel support vector machines, the hyperplanes play a similar role as support vectors in transforming the covariate space, but they are inferred from the data and their number increases at a much slower rate with the training data size. Related to deep neural networks, the proposed multi-layer structure gradually increases its modeling capability by increasing its number of layers, but allows inferring from the data both the width of each hidden layer and depth of the network to prevent building a model whose capacity is larger than necessary. 
 
To obtain a parsimonious deep neural network, one may also consider using a two-step approach that first trains an over-capacitized model and then compresses its size
 \cite{bucilua2006model,gong2014compressing,hinton2015distilling,han2015deep}.
The design of PBDN represents a
distinct philosophy. 
Moreover, PBDN is not contradicting with model compression, as 
these post-processing compression techniques \cite{bucilua2006model,gong2014compressing,hinton2015distilling,han2015deep} may be used to further compress PBDN.

For capacity regularization of the proposed PBDN, we choose to shrink both its width and depth. To shrink the width of a hidden layer, we propose the use of a gamma process \cite{ferguson73}, a draw from which consists of countably infinite atoms, each of which is used to represent a hyperplane in the covariate space. The gamma process has an inherence shrinkage mechanism as its number of atoms, whose random weights are larger than a certain positive constant $\epsilon>0$, follows a Poisson distribution, whose mean is finite almost surely (a.s.) and reduces towards zero as $\epsilon$ increases. To shrink the depth of the network, we propose a layer-wise greedy-learning strategy that increases the depth by adding one hidden layer at a time, and uses an appropriate model selection criterion to decide when to stop adding another one. Note related to our work, Zhou et al. \cite{PGBN_NIPS2015,GBN} combine the gamma process and greedy layer-wise training to build a Bayesian deep network in an unsupervised manner.  Our experiments show the proposed capacity regularization strategy helps successfully build a PBDN, providing state-of-the-art classification accuracy while maintaining low computational complexity for out-of-sample prediction. 
 We have also tried applying a highly optimized off-the-shelf deep neural network based classifier, whose network architecture for a given data is set to be the same as that inferred by the PBDN. However, we have found no performance gains, suggesting the efficacy of the PBDN's greedy training procedure that requires neither cross-validation nor fine-tuning. Note PBDN, like a conventional deep neural network, could also be further improved by introducing convolutional layers if the covariates have spatial, temporal, or other types of structures.

\section{ {Layer-width learning via infinite support hyperplane machines}}

The first essential component of the proposed capacity regularization strategy is to learn the width of a hidden layer. To fulfill this goal, we
 define infinite support hyperplane machine (iSHM), a label-asymmetric classifier,  that places in the covariate space countably infinite hyperplanes $\{\betav_k\}_{1,\infty}$, where each $\betav_k\in\mathbb{R}^{V+1}$ is associated with a weight $r_k>0$. We use a gamma process \cite{ferguson73} to generate $\{r_k,\betav_k\}_{1,\infty}$, making the infinite sum $\sum_{k=1}^\infty r_k$ be finite almost surely (a.s.). 
We measure the proximity of a covariate vector $\xv_i\in \mathbb{R}^{V+1}$ to 
$\betav_k$ using the softplus function of their inner product as $\ln(1+e^{\betav_k'\xv_i})$, which is a smoothed version of  $
 \mbox{ReLU}(\betav_k'\xv_i) = \max(0,\betav_k'\xv_i)
 $ that is 
widely used in deep neural networks \cite{nair2010rectified,glorot2011deep, 
krizhevsky2012imagenet,CRLU}. 
We consider that $\xv_i$ is far from hyperplane $k$ if $\ln(1+e^{\betav_k'\xv_i})$ is close to zero.  Thus, as $\xv_i$ moves away from hyperplane $k$, that proximity measure monotonically increases on one side of the hyperplane while decreasing on the other. 
We  pass $\textstyle \lambda_{i}=\sum\nolimits_{k=1}^\infty r_k \ln(1+e^{\betav_k'\xv_i}),$ a non-negative weighted combination of these proximity measures, through the Bernoulli-Poisson link \cite{EPM_AISTATS2015} $f(\lambda_i) = 1-e^{-\lambda_i}$ to define the conditional class probability as 

\vspace{-5mm}
\beq\label{eq:iSHM}
\textstyle P(y_i=1\given \{r_k,\betav_k\}_k,\xv_i) = 1-\prod\nolimits_{k=1}^\infty(1-p_{ik}),~~~~~~ p_{ik} = 
1-e^{-r_k\ln(1+e^{\betav_k'\xv_i})}.
\eeq

Note the model 
treats the data labeled as ``0'' and ``1'' differently, and \eqref{eq:iSHM} suggests that in general $P(y_i=1\given \{r_k,\betav_k\}_k,\xv_i) \neq 1- P(y_i=0\given \{r_k,-\betav_k\}_k,\xv_i)$. We will show that \beq\textstyle \sum\nolimits_{i}p_{ik}\xv_i\big{/}\sum\nolimits_i{p_{ik}}\label{eq:subtype}\eeq can be used to represent the $k$th data subtype discovered by the algorithm.    

\subsection{Nonparametric Bayesian hierarchical model}
One may readily notice from \eqref{eq:iSHM} that the noisy-OR construction, widely used in probabilistic reasoning \cite{Pearl:1988:PRI:52121,jordan1999introduction,arora2016provable}, is generalized by iSHM to attribute a binary outcome of $y_i=1$ to countably infinite hidden causes $p_{ik}$. 
Denoting $\bigvee$ as the logical OR operator, as $P(y=0)=\prod_k P(b_{k}=0)$ if $y = \bigvee\nolimits_{k} b_{k}$ and $\E[e^{-\theta}] = e^{-r\ln(1+e^x)}$ if $\theta\sim\mbox{Gamma}(r,e^x)$, we have an augmented form of 
 \eqref{eq:iSHM} as 
\begin{align}
&\textstyle y_i = \bigvee\nolimits_{k=1}^\infty b_{ik},~~b_{ik} \sim\mbox{Bernoulli}(p_{ik}),~p_{ik}=1-e^{-\theta_{ik}},~~\theta_{ik}\sim\mbox{Gamma}(r_k, e^{\betav_{k}'\xv_i}), \label{eq:NoisyOr}
\end{align}
where $b_{ik} \sim\mbox{Bernoulli}(p_{ik})$ can be further augmented~as
$
b_{ik}=\delta(m_{ik}\ge 1), ~
m_{ik}\sim\mbox{Pois}(\theta_{ik}),\notag
$
where $m_{ik}\in\mathbb{Z}$, $\mathbb{Z}:=\{0,1,\ldots\}$, and $\delta(x)$ equals to 1 if the condition $x$ is satisfied and 0 otherwise. 

We now marginalize $b_{ik}$ out to formally define 
iSHM. 
Let $G\sim\Gamma\mbox{P}(G_0,1/c)$ denote a gamma process defined on the product space $\mathbb{R}_+\times \Omega$, where $\mathbb{R}_+=\{x:x>0\}$, $c\in\mathbb{R}_+$, and $G_0$ is a finite and continuous base measure over a complete separable metric space $\Omega$.
As illustrated in Fig.~\ref{fig:DSN_graph} (b) in the Appendix, given a draw from $G$, expressed as $G=\sum_{k=1}^\infty r_k \delta_{\betav_k} $, where $\betav_k$ is an atom 
and $r_k$ is its weight,
the iSHM generates the label under the Bernoulli-Poisson link \cite{EPM_AISTATS2015} as
 \beq
y_i\given G, \xv_i \sim{\mbox{Bernoulli}}\big( 1-e^{ 
-\sum\nolimits_{k=1}^\infty r_k \ln(1+e^{\xv_i' \betav_k})} \big), \label{eq:sum_softplus_reg}
 \eeq
which can be represented as a noisy-OR model as in \eqref{eq:NoisyOr} or, as shown in Fig. \ref{fig:DSN_graph} (a), constructed 
 as \begin{align}
y_i &= \textstyle \delta(m_i\ge1),~~m_i=\sum\nolimits_{k=1}^\infty m_{ik},~m_{ik}\sim{\mbox{Pois}}(\theta_{ik}),~~\theta_{ik}\sim{\mbox{Gamma}}(r_k, e^{\betav_{k}'\xv_i}).\label{eq:BerPo1}
\end{align}
From \eqref{eq:NoisyOr} and \eqref{eq:BerPo1}, it is clear that one may declare hyperplane $k$ as inactive if $\sum_{i} b_{ik}=\sum_{i} m_{ik}=0$.

\subsection{Inductive bias and distinction from multilayer perceptron}

Below we reveal the inductive bias of iSHM in prioritizing the fit of the data labeled as ``1,'' due to the use of the Bernoulli-Poisson link that has previously been applied for network analysis \cite{EPM_AISTATS2015, caron2017sparse,zhou2018discussion} and multi-label learning \cite{rai2015large}. 
As 
the negative log-likelihood (NLL) for $\xv_i$ can be expressed~as
$$ 
\textstyle \mbox{NLL}(\xv_i) = -y_i\ln\big( 1-e^{ -\lambda_i} \big) +(1-y_i)\lambda_i,~~\lambda_i=\sum\nolimits_{k=1}^\infty r_k \ln(1+e^{\xv_i' \betav_k}), \notag
$$ 
we have $\mbox{NLL}(\xv_i) 
=\lambda_i -\ln(e^{\lambda_i}-1) $ if $y_i=1$ and $\mbox{NLL}(\xv_i)=\lambda_i$ if $y_i=0$. As $-\ln(e^{\lambda_i}-1)$ quickly explodes towards $+\infty $ as $\lambda_i\rightarrow 0$,  
when $y_i=1$, iSHM would adjust $r_k$ and $\betav_k$ to avoid at all cost overly suppressing $\xv_i$ ($i.e.$, making $\lambda_i$ too small). By contrast, it has a high tolerance of failing to sufficiently suppress $\xv_i$ with $y_i=0$.
Thus each $\xv_i$ with $y_i=1$ would be made sufficiently close to at least one active support hyperplane. By contrast, while each $\xv_i$ with $y_i=0$ is desired to be far away from any support hyperplanes, violating that is typically not strongly penalized. Therefore, by training a pair of iSHMs under two opposite labeling settings, two sets of support hyperplanes could be inferred to sufficiently cover the covariate space occupied by the training data from both classes. 

Note as in \eqref{eq:sum_softplus_reg},
 iSHM may be viewed as an infinite-wide single-hidden-layer neural network that connects the input layer 
 to the $k$th hidden unit via the connections weights $\betav_k$ and the softplus nonlinear activation function $
 \ln(1+e^{\betav_k'\xv_i})$, and further pass a non-negative weighted combination of these hidden units through the Bernoulli-Poisson link to obtain the conditional class probability. 
 From this point of view, 
it can be related to a single-hidden-layer multilayer perceptron (MLP) \cite{bishop1995neural,Goodfellow-et-al-2016} that uses a softplus activation function 
and cross-entropy loss, with the output activation expressed as $\sigma[\wv' \ln(1+e^{\Bmat\xv_i})]$, where $\sigma(x) = 1/(1+e^{-x})$, $K$ is the number of hidden units, $\Bmat = (\betav_1,\ldots,\betav_K)'$, and $\wv=(w_1,\ldots,w_K)'\in\mathbb{R}^K$. Note minimizing the cross-entropy loss is equivalent to maximizing the likelihood of 
$
y_i\given \wv,\Bmat,\xv_i \sim 
{\mbox{Bernoulli}}\big[ ({1+e^{ -
\sum_{k=1}^K w_{k} \ln(1+e^{\xv_i' \betav_k})} })^{-1}\big],\notag
$ 
which is biased towards fitting neither the data with $y_i=1$ nor these with $y_i=0$, since 
$$
\mbox{NLL}(\xv_i) = \ln(e^{-y_i\wv' \ln(1+e^{\Bmat\xv_i})}+e^{(1-y_i)\wv' \ln(1+e^{\Bmat\xv_i})}).$$
Therefore, while iSHM is structurally similar to an MLP, it is distinct in its unbounded layer width, 
its positive constraint on the weights $r_k$ connecting the hidden and output layers, its ability to rigorously define whether a hyperplane is active or inactive, 
and its inductive bias towards fitting the data labeled as ``1.'' As in practice labeling 
which class 
as ``1'' may be arbitrary, we predict the class label with 
$ 
( 1-e^{-\sum\nolimits_{k=1}^\infty r_k \ln(1+e^{\xv_i' \betav_k})} + e^{-\sum\nolimits_{k=1}^\infty r^*_k \ln(1+e^{\xv_i' \betav^*_k})} )/2, 
$ 
where 
$\{r_k, \betav_k\}_{1,\infty}$ and $\{r^*_k, \betav^*_k\}_{1,\infty}$ are from a pair of iSHMs trained by labeling the data belonging to this class as ``1'' and ``0,'' respectively.

\subsection{Convex polytope geometric constraint} 
It is straightforward to show that iSHM with a single unit-weighted hyperplane reduces to logistic regression 
$
y_i\sim{\mbox{Bernoulli}}[1/(1+e^{-\xv_i'\betav})] 
$. 
 To interpret the role of each individual support hyperplane when multiple non-negligibly weighted ones are inferred by iSHM, we analogize each $\betav_k$ to an expert of a committee that collectively make binary decisions. For expert (hyperplane) $k$, the weight $r_k$ indicates how strongly its opinion is weighted by the committee, $b_{ik}=0$ represents that it votes ``No,'' and $b_{ik}= 1$ represents that it votes ``Yes.'' Since $y_i =\bigvee\nolimits_{k=1}^\infty b_{ik}$, the committee would vote ``No'' if and only if all its experts vote ``No'' ($i.e.$, all 
 $b_{ik}$ are zeros), in other words, 
the committee would vote 
 ``Yes'' even if only a single expert votes ``Yes.'' 
 Let us now examine the confined covariate space  
 that satisfies the inequality
 $P(y_i=1\given \xv_i)
 \le p_0$, where a data point is labeled as ``1'' with a probability no greater than $p_0$. The following theorem shows that 
it 
 defines a confined space bounded by a convex polytope, as defined by 
 the intersection of countably infinite half-spaces 
 defined by $p_{ik}<p_0$.
 \begin{thm}[Convex polytope]\label{thm:sum_polytope}
 For iSHM, the confined space specified by the inequality 
 \beq
 \small P(y_i=1\given \{r_k,\betav_k\}_k, \xv_i) 
 \le p_0 \label{eq:sum_ineuqality}
 \eeq
 is bounded by a convex polytope defined by the set of solutions to countably infinite inequalities as
 \beq\small\label{eq:convex_polytope}
\xv_i' \betav_k \le \ln\big[(1-p_0)^{-\frac{1}{r_k}}-1\big], ~~ k\in\{1,2,\ldots\}.
\eeq
 \end{thm}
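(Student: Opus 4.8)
The plan is to prove the inclusion that the confined region defined by \eqref{eq:sum_ineuqality} is a subset of the intersection of the half-spaces in \eqref{eq:convex_polytope}, and then observe that this intersection is convex; together these give that the region is enclosed (bounded) by a convex polytope. First I would rewrite the global inequality in product form. Using \eqref{eq:iSHM}, the event $P(y_i=1\given\{r_k,\betav_k\}_k,\xv_i)\le p_0$ is equivalent to $\prod_{k=1}^\infty(1-p_{ik})\ge 1-p_0$, where each factor equals $1-p_{ik}=e^{-r_k\ln(1+e^{\betav_k'\xv_i})}=(1+e^{\betav_k'\xv_i})^{-r_k}$. Since $\ln(1+e^{\betav_k'\xv_i})>0$ and $r_k>0$, every factor lies strictly in $(0,1)$, so the partial products are monotone decreasing and bounded below by $0$; hence the infinite product is well defined (and strictly positive, as $\lambda_i=\sum_k r_k\ln(1+e^{\betav_k'\xv_i})<\infty$ a.s. under the gamma process construction).

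The key step is a monotonicity observation that decouples the global constraint into the infinitely many per-hyperplane constraints. Because each factor is at most $1$, for any fixed index $k'$ one has $\prod_{k=1}^\infty(1-p_{ik}) = (1-p_{ik'})\prod_{k\ne k'}(1-p_{ik}) \le 1-p_{ik'}$, obtained by passing to the limit in the partial products. Consequently, if $\xv_i$ satisfies $\prod_k(1-p_{ik})\ge 1-p_0$, then $1-p_0\le 1-p_{ik'}$, i.e.\ $p_{ik'}\le p_0$, for every $k'$. I expect this to be the main conceptual step: it is where the single global probability bound is converted into a countable family of independent per-hyperplane bounds, and it is also where one must be careful that the infinite product is legitimately defined so the factorization and limit are valid.

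It then remains to linearize each per-hyperplane constraint and assemble the polytope. Solving $p_{ik}\le p_0$, equivalently $(1+e^{\betav_k'\xv_i})^{-r_k}\ge 1-p_0$, for $\xv_i'\betav_k$ gives in turn $(1+e^{\betav_k'\xv_i})^{r_k}\le (1-p_0)^{-1}$, then $1+e^{\betav_k'\xv_i}\le (1-p_0)^{-1/r_k}$, and finally $\xv_i'\betav_k\le\ln\big[(1-p_0)^{-1/r_k}-1\big]$; the logarithm is well defined because $(1-p_0)^{-1/r_k}>1$ for $p_0\in(0,1)$ and $r_k>0$. This is exactly the $k$th half-space in \eqref{eq:convex_polytope}, so the confined region is contained in the set of $\xv_i$ satisfying all countably infinite inequalities. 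Each such inequality defines a (closed) half-space, which is convex, and an arbitrary intersection of convex sets is convex; therefore the intersection is a convex polytope that bounds the confined region, which completes the proof.
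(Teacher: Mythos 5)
Your proof is correct and is essentially the paper's argument in multiplicative form: the paper writes $P(y_i=1\given\cdot)\le p_0$ as $\lambda_i=\sum_k r_k\ln(1+e^{\xv_i'\betav_k})\le-\ln(1-p_0)$ and drops all but one non-negative summand, which is exactly your step of dropping all but one factor from $\prod_k(1-p_{ik})\ge 1-p_0$ after taking exponentials. The per-hyperplane rearrangement to $\xv_i'\betav_k\le\ln[(1-p_0)^{-1/r_k}-1]$ and the convexity of the intersection of half-spaces match the paper's conclusion.
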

 The convex polytope defined in \eqref{eq:convex_polytope} is enclosed by the intersection of countably infinite 
 half-spaces. If we set $p_0=0.5$ as the probability threshold to make binary decisions, then the convex polytope 
 assigns a label of $y_i=0$ to an $\xv_i$ inside the convex polytope ($i.e.$, an $\xv_i$ that satisfies all the inequalities in Eq. \ref{eq:convex_polytope}) with a relatively high 
 probability, and assigns a label of $y_i=1$ to an $\xv_i$ outside the convex polytope ($i.e.$, an $\xv_i$ that violates at least one of the inequalities in Eq. \ref{eq:convex_polytope}) with a probability of at least $50\%$.
Note that 
hyperplane $k$ with $r_k\rightarrow 0$  
has a negligible impact on the conditional class probability. Choosing the gamma 
process as the nonparametric Bayesian prior sidesteps the need to tune the number of experts. 
It shrinks the weights of all unnecessary experts, allowing automatically inferring a finite number of non-negligibly weighted ones (support hyperplanes) from the data.
We provide in Appendix \ref{sec:CPM} the connections to previously proposed multi-hyperplane models \cite{aiolli2005multiclass,wang2011trading,manwani2010learning, manwani2011polyceptron,kantchelian2014large}. 

\subsection{
Gibbs sampling 
and MAP inference via SGD
}\label{sec:inference}

 For the convenience of implementation, 
 we truncate the gamma process 
 with 
 a finite and discrete base measure as $G_0=\sum_{k=1}^K \frac{\gamma_0} K \delta_{\betav_{k}}$, where
$K$ will be set sufficiently large to approximate the truly countably infinite model.
 We express 
 iSHM using \eqref{eq:BerPo1} together with
 \begin{align} \small
&\small r_k\sim\mbox{Gamma}(\gamma_0/K,1/c_0),~\gamma_0\sim\mbox{Gamma}(a_0,1/b_0),~c_0\sim\mbox{Gamma}(e_0,1/f_0),\notag\\
&\small \betav_{k}\sim\prod\nolimits_{v=0}^{V}\textstyle{\int} \mathcal{N}(0,\alpha_{vk}^{-1}) 
\mbox{Gamma}(\alpha_{vk};a_\beta,1/b_{\beta k})d\alpha_{vk},~~b_{\beta k}\sim\mbox{Gamma}(e_0,1/f_0),\notag
\end{align}
where the normal gamma construction  promotes sparsity on
 the connection weights $\betav_k$ \cite{RVM}.
 
We describe both Gibbs sampling, desirable for uncertainty quantification, and maximum a posteriori (MAP) inference, suitable for large-scale training, in Algorithm \ref{alg:1}. 
We use data augmentation and marginalization to derive Gibbs sampling, with the details deferred to Appendix~B. 
 For MAP inference, we use Adam \cite{kingma2014adam} in Tensorflow to minimize a stochastic objective function as
$
 f(\{\betav_k,\ln r_k\}_{1}^K ,\{y_i,\xv_i\}_{i_1}^{i_M})+f(\{\betav^*_k,\ln r^*_k\}_{1}^{K^*},\{y^*_i,\xv_i\}_{i_1}^{i_M}),\notag
$
which embeds the hierarchical Bayesian model's inductive bias and inherent shrinking mechanism into optimization,
 where $M$ is the size of a randomly selected mini-batch, $y^*_i:=1-y_i$, $\lambda_i:=\sum\nolimits_{k=1}^K e^{\ln r_k} \ln(1+e^{\xv_i' \betav_k})$, and
\begin{align}
\centering
&\small f(\{\betav_k,\ln r_k\}_{1}^K ,\{y_i,\xv_i\}_{i_1}^{i_M})=\textstyle\sum_{k=1}^K\left(-\frac{\gamma_0}{K}\ln r_k +c_0 e^{\ln r_k}\right)+\textstyle(a_\beta+1/2)\sum_{v=0}^V\sum_{k=0}^K \notag\\
&\small~~~~~~~~~~~~~~~~~~~~~~~~~~~~~~~~~~~~~[ \ln(1+\beta_{vk}^2/(2b_{\beta k}) ) ] 
+\textstyle\frac{N}{M}\sum_{i=i_1}^{i_M} \left[-y_i\ln\big( 1-e^{ -\lambda_i} \big) +(1-y_i)\lambda_i\right].\normalsize
\end{align}

\section{
Network-depth learning via forward model selection}

The second essential component of the proposed capacity regularization strategy is to find a way to increase the network depth and determine how deep is deep enough. Our solution is to 
 sequentially stack a pair of iSHMs on top of the previously trained one, and develop a forward model selection criterion to decide when to stop stacking another pair. We refer to the resulted model as parsimonious Bayesian deep network (PBDN), as described below in detail.

The noisy-OR hyperplane interactions allow iSHM to go beyond simple linear separation, but with limited capacity 
due to the convex-polytope constraint imposed on the decision boundary. On the other hand, it is the convex-polytope constraint that provides an implicit regularization, determining how many 
non-negligibly weighted 
support hyperplanes 
are necessary
in the covariate space to sufficiently activate all data of class ``1,'' while somewhat suppressing the data of class ``0.'' In this paper, we find that the model capacity could be quickly enhanced by sequentially stacking such convex-polytope constraints under a feedforward deep structure, while preserving the virtue of being able to learn the number of support hyperplanes in the (transformed) covariate space. 

More specifically, as shown in Fig.~\ref{fig:DSN_graph} (c) of the Appendix, we first train a pair of iSHMs that regress the current labels $y_i\in\{0,1\}$ and the flipped ones $y^*_i = 1-y_i$, respectively, on the original covariates $\xv_i\in\mathbb{R}^{V}$. After obtaining $K_2$ support hyperplanes $\{\betav_k^{(1\rightarrow2)}\}_{1,K_2}$, 
constituted by the active support hyperplanes inferred by both iSHM trained with $y_i$ and the one trained with $y^*_i$, we use $\ln(1+e^{\xv_i'\betav_k^{(1\rightarrow 2)}})$ 
as the hidden units of the second layer (first hidden layer).
More precisely, with $t\in\{1,2,\ldots\}$, $K_0:=0$, $K_1:=V$, $\tilde{\xv}_{i}^{(0)}:=\emptyset$, 
and $\tilde{\xv}_{i}^{(1)}:={\xv}_{i}$, denoting 
$\xv_{i}^{(t)} : = [1,(\tilde{\xv}_{i}^{(t-1)})', (\tilde{\xv}_{i}^{(t)})']' \in\mathbb{R}^{K_{t-1}+K_t+1}$
as the input data vector to layer $t+1$, 
the $t$th added pair of iSHMs transform $\xv_{i}^{(t)}$ into the hidden units of layer $t+1$, expressed as
\begin{align}
&\tilde{\xv}_{i}^{(t+1)} = \big[\ln(1+e^{(\xv_i^{(t)})'\betav^{(t\rightarrow t+1)}_1}),\ldots,\ln(1+e^{(\xv_i^{(t)})'\betav^{(t\rightarrow t+1)}_{K_{t+1}}})\big]'~~. \notag 
\end{align}
Hence the input vectors used to train the next layer would be $\xv_{i}^{(t+1)} =[1,(\tilde{\xv}_{i}^{(t)})', (\tilde{\xv}_{i}^{(t+1)})']' \in\mathbb{R}^{K_{t}+K_{t+1}+1}$. 
Therefore, 
if the computational cost
of a single inner product $\xv_i'\betav_k$ ($e.g.$, logistic regression) is one, then that for 
$T$ hidden layers would be about
$
\textstyle 
\sum_{t=1}^T \nolimits {(K_{t-1}+K_t+1)K_{t+1}}/{(V+1)}.
$
Note one may also use $\xv_{i}^{(t+1)} = \tilde{\xv}_{i}^{(t+1)}$, or $\xv_{i}^{(t+1)}= [(\xv_{i}^{(t)})', (\tilde{\xv}_{i}^{(t+1)})']'$, or other related concatenation methods to construct the covariates to train the next layer.

 Our intuition for why PBDN, constructed in this greedy-layer-wise manner, works well is that for two iSHMs trained on the same 
 covariate space under two opposite labeling settings, one iSHM places enough hyperplanes to define the complement of a convex polytope to sufficiently activate all data labeled as ``1,'' while the other does so for all data labeled as ``0.'' Thus, for any $\xv_i$, at least one $p_{ik}$ 
 would be sufficiently activated, in other words, $\xv_i$ would be sufficiently close to 
 at least one of the 
 active hyperplanes of the iSHM pair. This mechanism 
prevents any $\xv_i$ from being completely suppressed after transformation. 
Consequently, these transformed covariates $\ln(1+e^{\betav_{k}'\xv_i})$, which can also be concatenated with $\xv_i$, will 
be further used to train another iSHM pair. 
Thus even though a single iSHM pair may not be powerful enough, by keeping all covariate vectors sufficiently activated after transformation, they could be simply stacked sequentially to gradually enhance the model capacity, with a strong resistance to overfitting and hence without the necessity of cross-validation.

While stacking an additional iSHM pair on PBDN could enhance the model capacity, when $T$ hidden layers is sufficiently deeper that the two classes become well separated, there is no more need to 
add an extra iSHM pair. To detect when it is appropriate to stop adding another iSHM pair, as shown in Algorithm \ref{alg:2} of the Appendix, we consider a forward model selection strategy that sequentially stacks an iSHM pair after another, until the following criterion starts to rise:
\begin{align}\label{eq:AIC}
&\small\mbox{AIC}({T}) =\sum\nolimits_{t=1}^T [2(K_t+1)K_{t+1}] +2K_{T+1} -2\sum \nolimits_{i}\left[\ln P(y_i\given \xv_i^{(T)}) +\ln P(y^*_i\given\xv_i^{(T)})\right],
\end{align}
%
where $2(K_t+1)K_{t+1}$ represents the cost of adding the $t$th hidden layer and $2K_{T+1}$ represents the cost of using $K_{T+1}$ nonnegative weights $\{r_k^{(T+1)}\}_k$ to connect the $T$th hidden layer and the output layer. With \eqref{eq:AIC}, we choose PBDN with $T$ hidden layers if $\mbox{AIC}({t+1})\le\mbox{AIC}({t})$ for $t=1,\ldots, T-1$ and $\mbox{AIC}({T+1})>\mbox{AIC}({T})$.
We also consider another model selection criterion that accounts for the sparsity of $\betav_k^{(t\rightarrow t+1)}$, the connection weights between adjacent layers, using 
\begin{align}\label{eq:AIC_eps}
\!\!\!\resizebox{0.94\hsize}{!}{$
\footnotesize \mbox{AIC}_{\epsilon}(T) \small =\sum\nolimits_{t=1}^{T}\! 2\left( \left\| |\Bmat_t| > \epsilon \beta_{t\max} \right\|_0 + \left\||\Bmat^{*}_t |\!>\! \epsilon \beta^*_{t\max} \right\|_0 \right) 
\footnotesize
+2K_{T+1}-2\sum \nolimits_{i}\left[\ln P(y_i\given \xv_i^{(T)}) +\ln P(y^*_i\given\xv_i^{(T)})\right],
$}
\end{align}
where $\|\Bmat \|_0$ is the number of nonzero elements in matrix~$\Bmat$, $\epsilon>0$ is a small constant, and $\Bmat_t$ and $\Bmat^*_t$ consist of the $\betav_k^{(t\rightarrow t+1)}$ trained by the first and second iSHMs of the $t$th iSHM pair, respectively, with $\beta_{t\max}$ and $\beta^*_{t\max}$ as their respective maximum absolute values. 

Note if the covariates have any spatial or temporal structures to exploit, one may replace each $\xv_i'\betav_k$ in iSHM with $[\text{CNN}_{\phiv}(\xv_i)]'\betav_k$, where $\text{CNN}_{\phiv}(\cdotv)$ represents a convolutional neural network parameterized by $\phiv$, to construct a CNN-iSHM, which can then be further greedily grown  into CNN-PBDN. Customizing PBDNs for structured covariates, such as image pixels and audio measurements, is a promising research topic, however, is beyond the scope of this paper.

\section{ Illustrations and {experimental results}}

Code for reproducible research is available at \href{https://github.com/mingyuanzhou/PBDN}{https://github.com/mingyuanzhou/PBDN}. 
To illustrate the imposed geometric constraint and inductive bias of a single iSHM, we first consider a challenging 2-D ``two spirals'' dataset, as shown 
 Fig. \ref{fig:circle_1}, whose two classes are not fully separable by a convex polytope. We train 10 pairs of iSHMs one pair after another, which are organized into a ten-hidden-layer PBDN, whose numbers of hidden units from the 1st to 10th hidden layers ($i.e.$, numbers of support hyperplanes of the 1st to 10th iSHM pairs) are inferred to be
 8,
 14,
 15,
 11,
 19,
 22,
 23,
 18,
 19,
 and 29,
respectively. Both $\mbox{AIC}$ and $\mbox{AIC}_{\epsilon=0.01}$
 infers the depth as $T=4$.

\begin{figure}[!t]
 \centering
 \begin{tabular}{cc}
 \scriptsize 1) A pair of infinite support hyperplane machines (iSHMs) &\scriptsize 2) PBDN with 2 pairs of iSHMs \\ 
\hspace{-.0em} \includegraphics[width=0.43\textwidth]{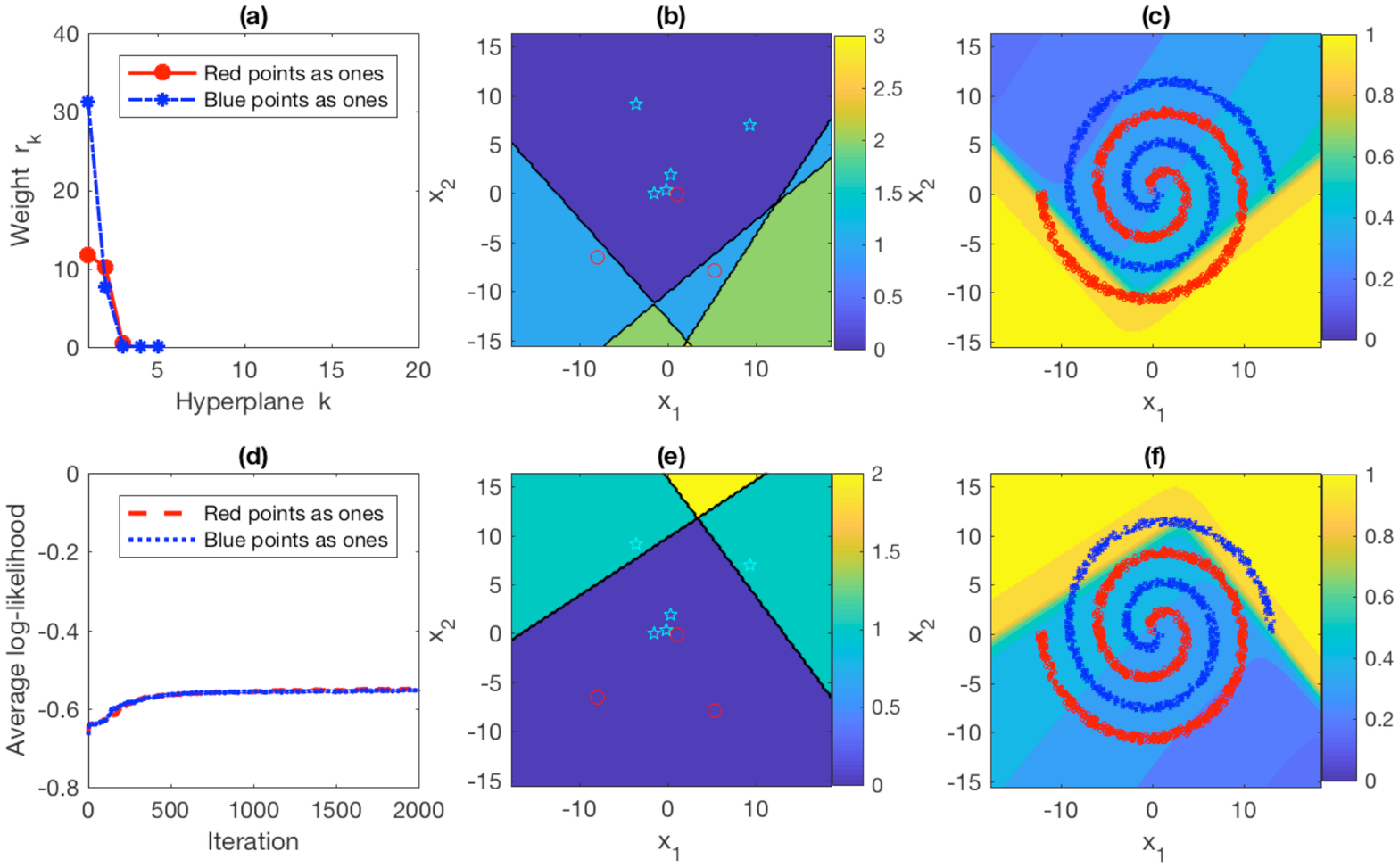}&
\includegraphics[width=0.43\textwidth]{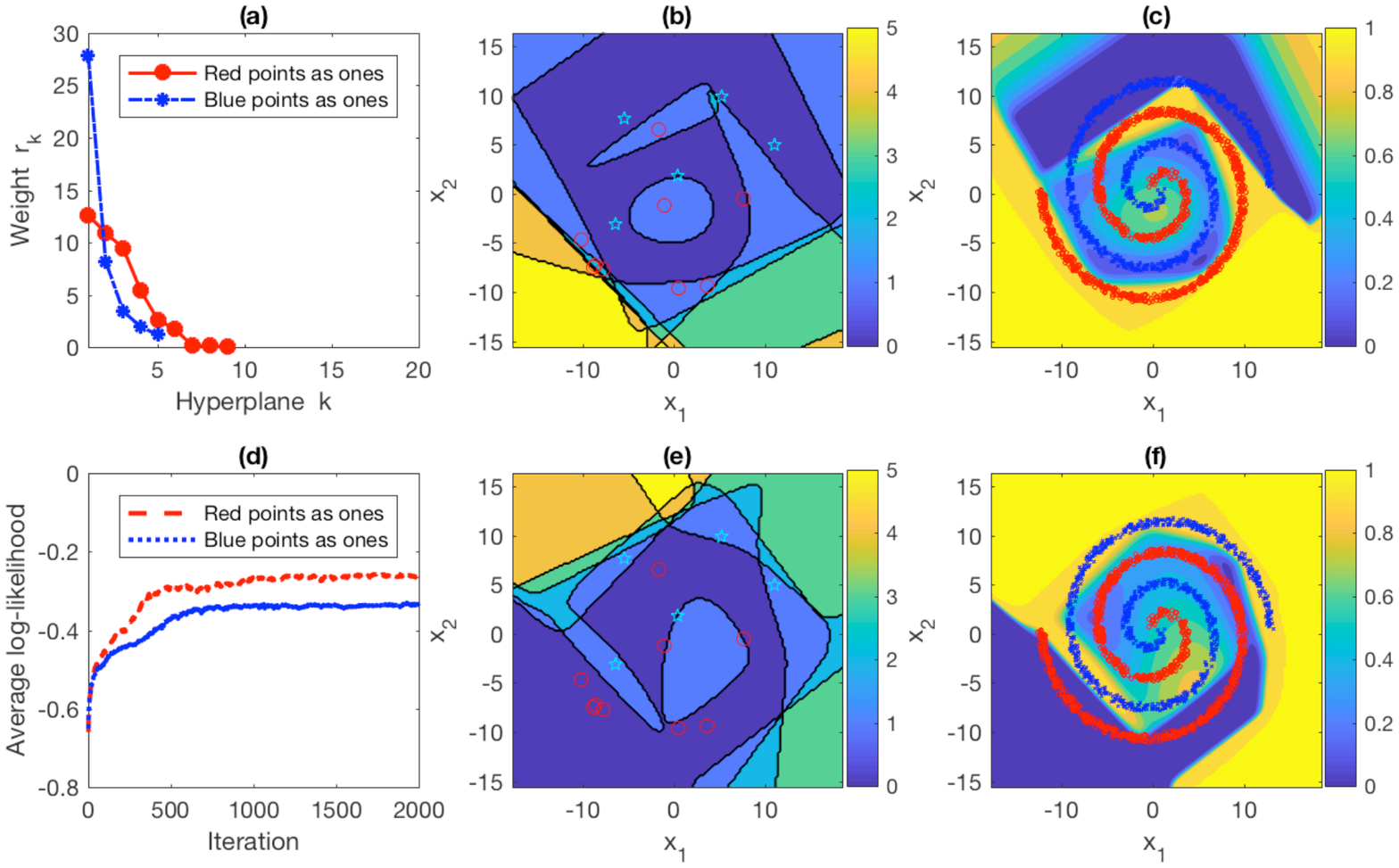}\vspace{-1mm}
\\
\scriptsize 3) PBDN with 4 pairs of iSHMs &\scriptsize 4) PBDN with 10 pairs of iSHMs\\ 
\hspace{-.0em} \includegraphics[width=0.43\textwidth]{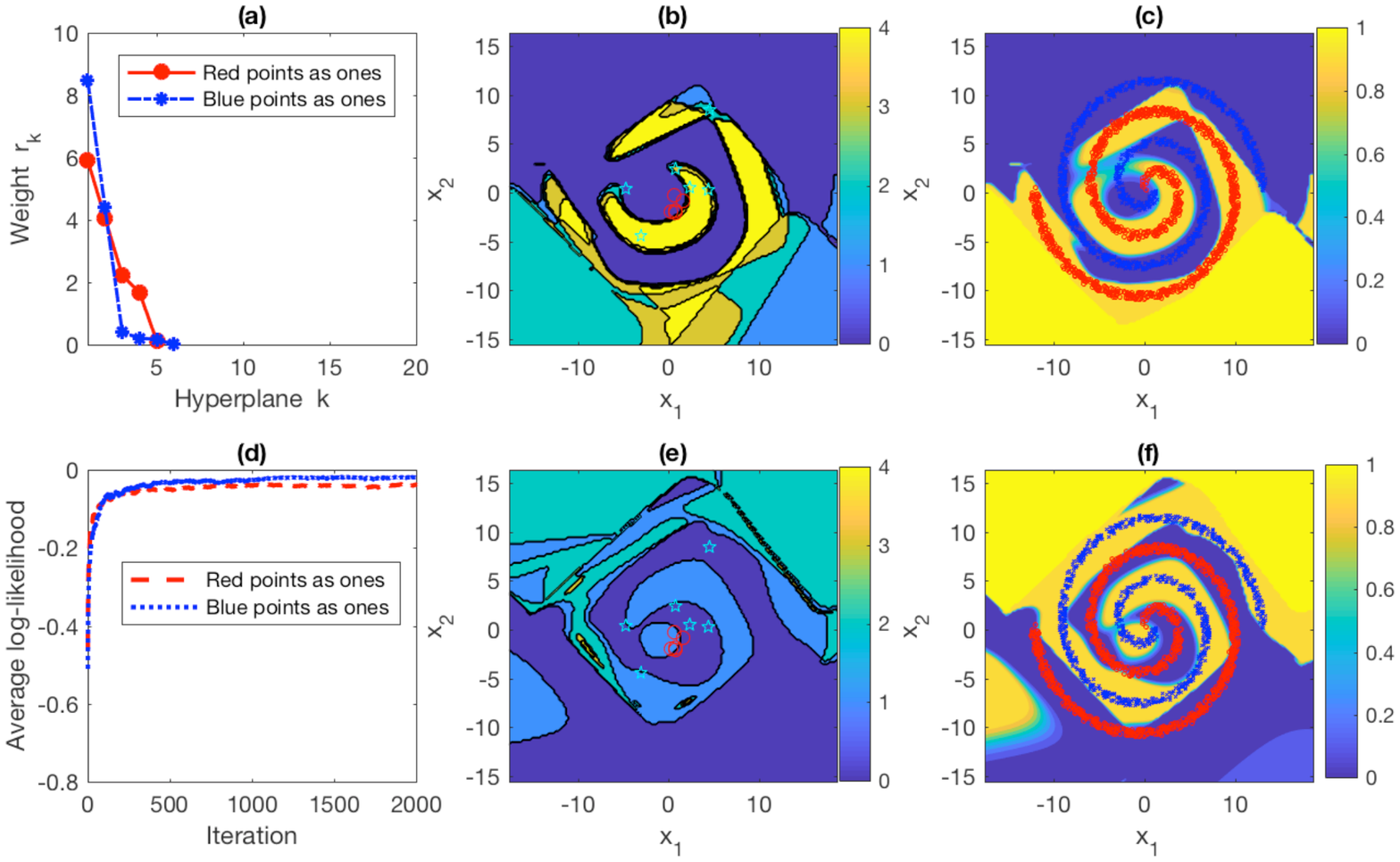}&
 \includegraphics[width=0.43\textwidth]{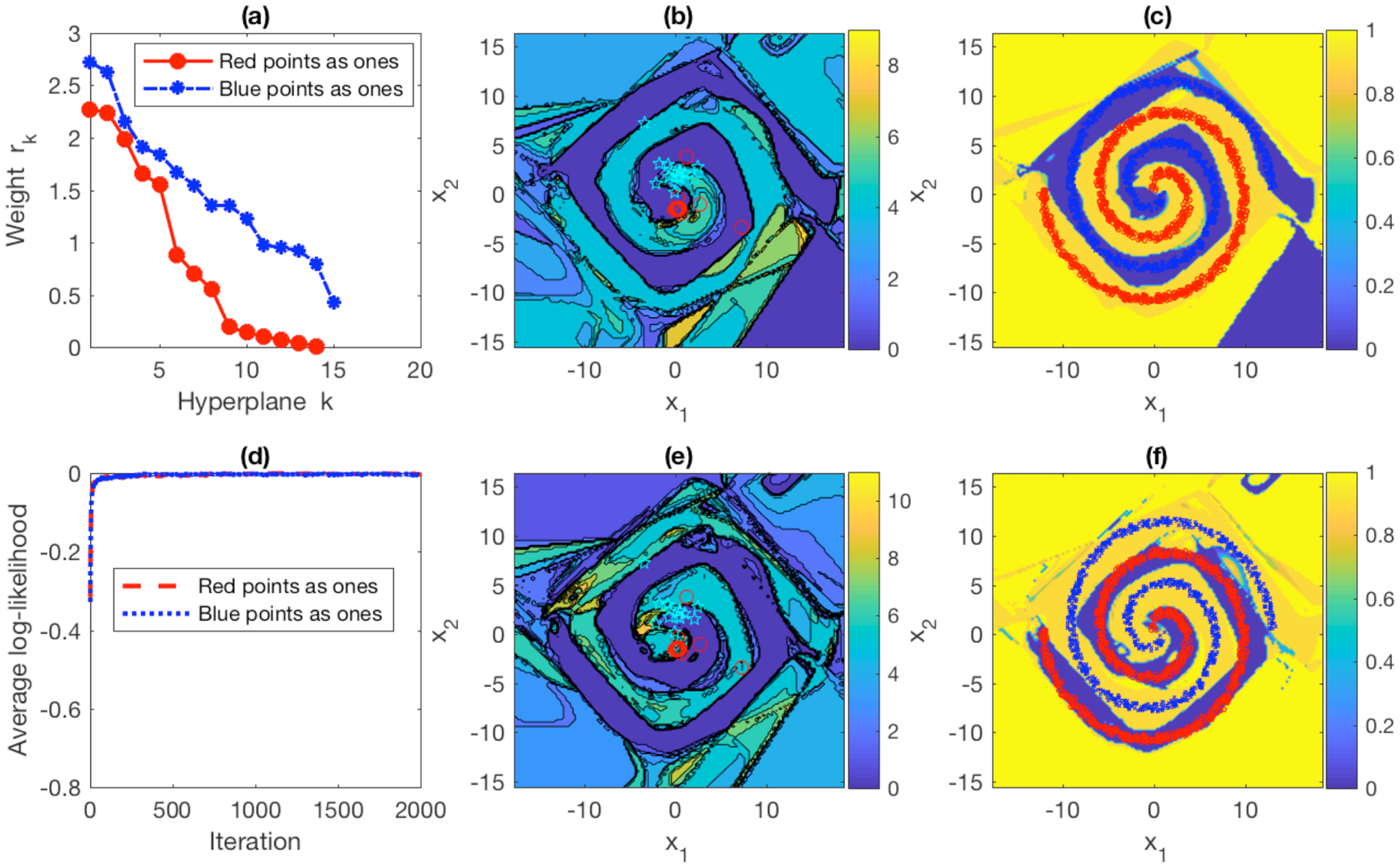}\vspace{-1mm}
\\
\end{tabular}\vspace{-2mm}
\caption{\footnotesize \label{fig:circle_1}
\footnotesize Visualization of PBDN, each layer of which is a pair of iSHMs trained on a ``two spirals'' dataset under two opposite labeling settings. 
For \textbf{Subfigure 1)}, 
(a) shows the weights $r_k$ of the inferred active support hyperplanes, ordered by their values and (d) shows the trace plot of the average per data point log-likelihood. For iSHM trained by labeling the red (blue) points as ones (zeros), (b) shows a contour map, the value of each point of which represents how many inequalities specified in \eqref{eq:convex_polytope} are violated, and whose region with zero values corresponds to the convex polytope enclosed by the intersection of the hyperplanes defined in \eqref{eq:convex_polytope}, and (c) shows the contour map of predicted class probabilities. (e-f) are analogous plots to (b-c) for iSHM trained with the blue points labeled as ones. The inferred data subtypes as in \eqref{eq:subtype} are represented as red circles and blue pentagrams in subplots (b) and (d). 
 \textbf{Subfigures 2)-4)} are analogous to 1), with two main differences: ($i$) the transformed covariates to train the newly added iSHM pair are 
 obtained by propagating the original 2-D covariates through the previously trained iSHM pairs, 
 and ($ii$) the contour maps in subplots (b) and (e) visualize the 
 iSHM linear hyperplanes in the transformed 
 space 
 by projecting them back to the original 2-D covariate space.
 }
 \vspace{-5mm} 
 \end{figure}

For Fig. \ref{fig:circle_1},
we first train an iSHM by labeling the red points as ``1'' and blue as ``0,'' whose results are shown in subplots 1) (b-c), and another iSHM under the opposite labeling setting, whose results are shown in subplots 1) (e-f). 
It is evident that when labeling the red points as ``1,'' as shown in 1) (a-c), iSHM infers a convex polytope, intersected by three active support hyperplanes, to enclose the blue points, but at the same time allows the red points within that convex polytope to pass through with appreciable activations. When labeling the blue points as ``1,'' iSHM infers five active support hyperplane, two of which are visible in the covariate space shown in 1) (e), 
to enclose the red points, but at the same time allows the blue points within that convex polytope to pass through with appreciable activations, as shown in 1) (f).
Only capable of using a convex polytope to enclose the data labeled as ``0'' is a restriction of iSHM, but it is also why the iSHM pair could appropriately place two parsimonious sets of active support hyperplanes in the covariate space, ensuring the maximum distance of any data point to these support hyperplanes to be sufficiently small. 

Second, we concatenate the 8-D transformed and 2-D original covariates as 10-D covariates, which is further augmented with a constant bias term of one, to train the second pair of iSHMs. As in subplot 2) (a), five active support hyperplanes are inferred in the 10-D covariate space when labeling the blue points as ``1,'' which could be matched to five nonzero smooth segments of the original 2-D spaces shown in 2) (e), which well align with highly activated regions in 2) (f). Again, with the inductive bias, as in 2) (f), all positive labeled data points are sufficiently activated at the expense of allowing some negative ones to be only moderately suppressed. Nine support hyperplanes are inferred when labeling the red points as ``1,'' and similar activation behaviors can also be observed in 2) (b-c). 

We continue the same greedy layer-wise training strategy to add another eight iSHM pairs. From Figs. \ref{fig:circle_1} 3)-4) it becomes more and more clear that a support hyperplane, inferred in the transformed covariate space of a deep hidden layer, could be used to represent the boundary of a complex but smooth segment of the original covariate space that well encloses all or a subset of data labeled as~``1.''

We  apply PBDN to four different MNIST binary classification tasks and compare its performance with DNN (128-64), a two-hidden-layer deep neural network that will be detailedly described below. As  in Tab. \ref{fig:subtype}, both AIC and $\mbox{AIC}_{\epsilon=0.01}$ infer the depth as $T=1$ for PBDN, and infer for each class only a few active hyperplanes, each of which represents a distinct data subtype, as calculated with \eqref{eq:subtype}. In a random trial, the inferred networks of PBDN for all four tasks have only a single hidden layer with at most 6 active hidden units. Thus its testing computation 
is much lower than DNN (128-64), while providing an overall lower testing error rate (both trained with 4000 mini-batches of size 100).

 \begin{table}[!t]
 \caption{\footnotesize \label{fig:subtype}
\small Visualization of the subtypes inferred by PBDN in a random trial and comparison of classification error rates over five random trials between PBDN and a two-hidden-layer DNN (128-64) on four different MNIST binary classification tasks. 
}
 \centering
 \resizebox{1\textwidth}{!}{ 
 \begin{tabular}{ccccc}
 \toprule
 &\scriptsize (a) Subtypes of 3 in 3 vs 5 &\scriptsize (b) Subtypes of 3 in 3 vs 8 & \scriptsize (c) Subtypes of 4 in 4 vs 7 &\scriptsize (d) Subtypes of 4 in 4 vs 9\\ 
&\hspace{0em} \includegraphics[height=0.047\textwidth]{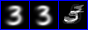}&
\hspace{1em} \includegraphics[height=0.047\textwidth]{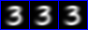}&
\hspace{0em} \includegraphics[height=0.047\textwidth]{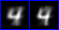}&
\hspace{1em} \includegraphics[height=0.047\textwidth]{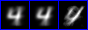}\\
&\scriptsize (e) Subtypes of 5 in 3 vs 5 &\scriptsize (f) Subtypes of 8 in 3 vs 8 & \scriptsize (g) Subtypes of 7 in 4 vs 7 &\scriptsize (h) Subtypes of 9 in 4 vs 9\\
&\hspace{0em} \includegraphics[height=0.047\textwidth]{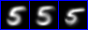}&
\hspace{1em} \includegraphics[height=0.047\textwidth]{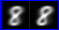}&
\hspace{0em} \includegraphics[height=0.047\textwidth]{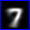}&
\hspace{1em} \includegraphics[height=0.047\textwidth]{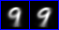}\\
 \midrule
\scriptsize PBDN &\scriptsize $2.53\% \pm 0.22\%$ &\scriptsize $2.66\%\pm0.27\%$  &\scriptsize $1.37\%\pm 0.18\%$  &\scriptsize $2.95\%\pm 0.47\%$  \\
\midrule
\scriptsize DNN & \scriptsize $2.78\% \pm 0.36\%$ &\scriptsize $2.93\%\pm0.40\%$  &\scriptsize $1.21\%\pm 0.12\%$  &\scriptsize $2.98\%\pm 0.17\%$ \\
\bottomrule
\end{tabular}}\vspace{-5mm}
\end{table}

Below we provide comprehensive comparison on eight widely used benchmark datasets between
the proposed PBDNs  and a variety of algorithms, including logistic regression, 
Gaussian radial basis function (RBF) kernel support vector machine (SVM), 
relevance vector machine (RVM) \cite{RVM}, adaptive multi-hyperplane machine (AMM) \cite{wang2011trading}, convex polytope machine (CPM) \cite{kantchelian2014large}, and the deep neural network (DNN) classifier (DNNClassifier) provided in Tensorflow \cite{tensorflow2015-whitepaper}. Except for logistic regression that is a linear classifier, both kernel SVM and RVM are widely used nonlinear classifiers relying on the kernel trick, both AMM and CPM 
 intersect multiple hyperplanes to construct their decision boundaries, and DNN uses a multilayer feedforward network, whose network structure often needs to be tuned to achieve a good balance between data fitting and model complexity, to handle nonlinearity. We consider DNN (8-4), a two-hidden-layer DNN that uses 8 and 4 hidden units for its first and second hidden layers, respectively, DNN (32-16), and DNN (128-64).
In the Appendix, we summarize in Tab.~\ref{tab:data} the information of eight benchmark datasets, including banana, breast cancer, titanic, waveform, german, image, ijcnn1, and a9a. 
For a fair comparison, to ensure the same training/testing partitions for all algorithms across all datasets, we report the results by using either widely used open-source software packages or the code made public available by the original authors. We describe in the Appendix 
the settings of all competing algorithms. 

For all datasets, we follow Algorithm \ref{alg:1}  to first train a single-hidden-layer PBDN (PBDN1), $i.e.$, a pair of iHSMs fitted under two opposite labeling settings. We then follow Algorithm \ref{alg:2} to train another pair of iHSMs to construct a two-hidden-layer PBDN (PBDN2), and repeat the same procedure to train PBDN3 and PBDN4. Note we observe that PBDN's log-likelihood increases rapidly during the first few hundred MCMC/SGD iterations, and then keeps increasing at a slower pace and eventually fluctuates. However, it often takes more iterations to shrink the weights of unneeded hyperplanes towards deactivation. Thus although  insufficient   iterations may not necessarily degrade the final out-of-sample prediction accuracy, it may lead to a less compact network and hence higher computational cost for out-of-sample prediction. 
For each iHSM, we set the upper bound on the number of support hyperplanes as $K_{\max}=20$. For Gibbs sampling, we run 5000 iterations and record $\{r_k, \betav_{k}\}_k$ with the highest likelihood 
during the last 2500 iterations; for MAP, we process 4000 mini-batches of size $M=100$, with $0.05/(4+T)$ as the Adam learning rate for the $T$th added iSHM pair. 
We use the inferred $\{r_k, \betav_{k}\}_k$ 
to either produce out-of-sample predictions or generate transformed covariates for the next layer. 
We set $a_0=b_0=0.01$, $e_0=f_0=1$, and $a_\beta = 10^{-6}$ for Gibbs sampling. We fix $\gamma_0=c_0=1$ and $a_\beta=b_{\beta k}=10^{-6}$ for MAP inference. 
As in 
Algorithm \ref{alg:1}, we 
prune inactive support hyperplanes once every 200 MCMC or 500 SGD iterations to facilitate computation.

 \begin{table}[!t]
\caption{\small Comparison of classification error rates between a variety of algorithms and 
the proposed PBDNs with 1, 2, or 4 hidden layers, and PBDN-AIC and PBDN-AIC$_{\epsilon=0.01}$ trained with Gibbs sampling or SGD. 
Displayed in the last two rows of each column 
are the average of the error rates and that of computational complexities of an algorithm normalized by
those of kernel SVM.} 
\centering
\label{tab:Error} 
         \resizebox{.99\textwidth}{!}{                                                                                                                                                                                     
\begin{tabular}{c|ccccccccccccccc}                                                                                                                                                     
\toprule                                                                                                                                                                                            
  & LR & SVM & RVM & AMM & CPM & DNN & DNN  & DNN  & PBDN1 & PBDN2 & PBDN4 & AIC & AIC$_\epsilon$ & AIC & AIC$_\epsilon$\\  
   &  &  &  &  &  & (8-4) &(32-16) & (128-64) &  &  &  & Gibbs & Gibbs & SGD & SGD \\                                 
\toprule                                                                                                                                                                                                
banana & $47.76$ & $10.85$ & $11.08$ & $18.76$ & $21.59$ & $14.07$ & $10.88$ & $10.91$ & $22.54$ & $10.94$ & $10.85$ & $10.97$ & $10.97$ & $11.49$ & $11.79$ \\                                                                                                                                                                                                                             
  & $\pm 4.38$ & $\pm 0.57$ & $\pm 0.69$ & $\pm 4.09$ & $\pm 3.00$ & $\pm 5.87$ & $\pm 0.36$ & $\pm 0.45$ & $\pm 5.28$ & $\pm 0.46$ & $\pm 0.43$ & $\pm 0.46$ & $\pm 0.46$ & $\pm 0.65$ & $\pm 0.89$ \\
\midrule                                                                                                                                                                                                
breast cancer & $28.05$ & $28.44$ & $31.56$ & $31.82$ & $30.13$ & $32.73$ & $33.51$ & $32.21$ & $29.22$ & $30.26$ & $30.26$ & $29.22$ & $29.22$ & $32.08$ & $29.87$ \\                                                                                                                                                                                                                               
  & $\pm 3.68$ & $\pm 4.52$ & $\pm 4.66$ & $\pm 4.47$ & $\pm 5.26$ & $\pm 4.77$ & $\pm 6.47$ & $\pm 6.64$ & $\pm 3.53$ & $\pm 4.86$ & $\pm 5.55$ & $\pm 3.53$ & $\pm 3.53$ & $\pm 6.18$ & $\pm 5.27$ \\
\midrule                                                                                                                                                                                               
titanic & $22.67$ & $22.33$ & $23.20$ & $28.85$ & $23.38$ & $22.32$ & $22.35$ & $22.28$ & $22.88$ & $22.25$ & $22.16$ & $22.88$ & $22.88$ & $23.00$ & $23.00$ \\                                                                                                                                                                                                                                    
  & $\pm 0.98$ & $\pm 0.63$ & $\pm 1.08$ & $\pm 8.56$ & $\pm 3.23$ & $\pm 1.38$ & $\pm 1.36$ & $\pm 1.37$ & $\pm 0.59$ & $\pm 1.27$ & $\pm 1.13$ & $\pm 0.59$ & $\pm 0.59$ & $\pm 0.37$ & $\pm 0.37$ \\
\midrule                                                                                                                                                                                                 
waveform & $13.33$ & $10.73$ & $11.16$ & $11.81$ & $13.52$ & $12.43$ & $11.66$ & $11.20$ & $11.67$ & $11.40$ & $11.69$ & $11.42$ & $11.45$ & $12.38$ & $12.04$ \\                                                                                                                                                                                                                                      
  & $\pm 0.59$ & $\pm 0.86$ & $\pm 0.72$ & $\pm 1.13$ & $\pm 1.97$ & $\pm 0.91$ & $\pm 0.89$ & $\pm 0.63$ & $\pm 0.90$ & $\pm 0.86$ & $\pm 1.34$ & $\pm 0.94$ & $\pm 0.93$ & $\pm 0.59$ & $\pm 0.72$ \\
\midrule                                                                                                                                                                                               
german & $23.63$ & $23.30$ & $23.67$ & $25.13$ & $23.57$ & $27.83$ & $28.47$ & $25.73$ & $23.57$ & $23.80$ & $23.77$ & $23.77$ & $23.90$ & $26.87$ & $23.93$ \\                                                                                                                                                                                                                                    
  & $\pm 1.70$ & $\pm 2.51$ & $\pm 2.28$ & $\pm 3.73$ & $\pm 2.15$ & $\pm 3.60$ & $\pm 3.00$ & $\pm 2.62$ & $\pm 2.43$ & $\pm 2.22$ & $\pm 2.27$ & $\pm 2.52$ & $\pm 2.56$ & $\pm 2.60$ & $\pm 2.01$ \\
\midrule                                                                                                                                                                                                 
image & $17.53$ & $2.84$ & $3.82$ & $3.82$ & $3.59$ & $4.83$ & $2.54$ & $2.44$ & $3.32$ & $2.43$ & $2.30$ & $2.36$ & $2.30$ & $2.18$ & $2.27$ \\                                                                                                                                                                                                                                                   
  & $\pm 1.05$ & $\pm 0.52$ & $\pm 0.59$ & $\pm 0.87$ & $\pm 0.71$ & $\pm 1.51$ & $\pm 0.45$ & $\pm 0.56$ & $\pm 0.59$ & $\pm 0.51$ & $\pm 0.40$ & $\pm 0.54$ & $\pm 0.52$ & $\pm 0.41$ & $\pm 0.36$ \\
\midrule                                                                                                                                                                                                 
ijcnn1 & $8.41$ & $4.01$ & $5.37$ & $4.82$ & $4.83$ & $6.35$ & $5.17$ & $4.17$ & $5.46$ & $4.33$ & $4.09$ & $4.33$ & $4.06$ & $4.18$ & $4.15$ \\                                                                                                                                                                                                                                                    
  & $\pm 0.60$ & $\pm 0.53$ & $\pm 1.40$ & $\pm 1.99$ & $\pm 1.14$ & $\pm 1.33$ & $\pm 0.83$ & $\pm 0.97$ & $\pm 1.32$ & $\pm 0.84$ & $\pm 0.76$ & $\pm 0.84$ & $\pm 0.84$ & $\pm 0.73$ & $\pm 0.68$ \\
\midrule                                                                                                                                                                                                
a9a & $15.34$ & $15.87$ & $15.39$ & $15.97$ & $15.56$ & $15.82$ & $16.16$ & $16.97$ & $15.74$ & $15.64$ & $15.70$ & $15.74$ & $15.74$ & $19.90$ & $17.13$ \\                                                                                                                                                                                                                                           
  & $\pm 0.11$ & $\pm 0.33$ & $\pm 0.12$ & $\pm 0.23$ & $\pm 0.23$ & $\pm 0.23$ & $\pm 0.22$ & $\pm 0.55$ & $\pm 0.12$ & $\pm 0.10$ & $\pm 0.09$ & $\pm 0.12$ & $\pm 0.12$ & $\pm 0.30$ & $\pm 0.19$ \\
\bottomrule          
\begin{tabular}{@{}c@{}}\scriptsize Mean of SVM\\ \scriptsize normalized errors\end{tabular}  &2.237 &   1.000 &    1.110   & 1.234 &   1.227 &   1.260&    1.087&    1.031 &   1.219    &1.009 &   0.998  &  1.006   &0.996    &1.073   & 1.029     \\
\bottomrule   
\begin{tabular}{@{}c@{}}\scriptsize Mean of SVM\\ \scriptsize normalized $K$\end{tabular} & $0.006$ & $1.000$ & $0.113$ & $0.069$ & $0.046$ & $0.073$ & $0.635$ & $8.050$ & $0.042$ & $0.060$ & $0.160$ & $0.057$ & $0.064$ & $0.128$ & $0.088$ \\                                                                                                                                                      
\end{tabular}   }                                                                                                                      
 \vspace{-1mm}

\scriptsize
\caption{\small The inferred depth of PBDN 
that increases its depth until a model selection criterion 
 starts to rise.}\label{tab:depth}
\vspace{-1mm}
\begin{center}
 \resizebox{.98\textwidth}{!}{ 
\begin{tabular}{l | c c c c c c c c c}
\toprule
 Dataset & banana & breast cancer & titanic & waveform & german & image & ijcnn1 & a9a \\
\toprule 
AIC-Gibbs &$2.30\pm0.48$ & $1.00\pm 0.00$ & $1.00\pm 0.00$& $1.90\pm 0.74$&$1.30\pm0.67$ & $2.40\pm 0.52$ & $2.00\pm 0.00$ & $1.00\pm 0.00$\\
AIC$_{\epsilon=0.01}$-Gibbs &$2.30\pm0.48$ & $1.00\pm 0.00$ & $1.00\pm 0.00$& $2.00\pm 0.67$&$1.60\pm0.84$ & $2.60\pm 0.52$ & $3.40\pm 0.55$ & $1.00\pm 0.00$ \\
\toprule 
AIC-SGD &$3.20\pm0.78$ & $1.90\pm 0.99$ & $1.00\pm 0.00$& $2.40\pm 0.52$&$2.80\pm0.63$ & $2.90\pm 0.74$ & $3.20\pm 0.45$ & $3.20\pm 0.45$\\
AIC$_{\epsilon=0.01}$-SGD &$2.80\pm0.63$ & $1.00\pm 0.00$ & $1.00\pm 0.00$& $1.50\pm 0.53$&$1.00\pm0.00$ & $2.00\pm 0.00$ & $3.00\pm 0.00$ & $1.00\pm 0.00$ 
\end{tabular}}
\end{center}
\vspace{-6mm}
\end{table}%

We record the out-of-sample-prediction errors and computational complexity of various algorithms over these eight benchmark datasets in Tab. \ref{tab:Error} and Tab. \ref{tab:K} of the Appendix, respectively, and summarize in Tab. \ref{tab:Error} the means of SVM normalized errors and numbers of support hyperplanes/vectors. 
Overall, 
 PBDN using AIC$_{\epsilon}$ in \eqref{eq:AIC_eps} with $\epsilon=0.01$ to determine the depth, referred to as PBDN-AIC$_{\epsilon=0.01}$, has the highest out-of-sample prediction accuracy, 
followed by PBDN4, 
the RBF kernel SVM, 
PBDN using AIC in \eqref{eq:AIC} to determine the depth, referred to as PBDN-AIC, 
 PBDN2,  PBDN-AIC$_{\epsilon=0.01}$ solved with SGD, DNN (128-64), 
 PBDN-AIC solved with SGD, and DNN (32-16).

Overall, logistic regression does not perform well, which is not surprising as it is a linear classifier that uses a single hyperplane to partition the covariate space into two halves to separate one class from the other. 
As shown in Tab. \ref{tab:Error} of the Appendix,
for breast cancer, titanic, german, and a9a, all classifiers have comparable classification errors, suggesting minor or no advantages of using a nonlinear classifier on them. 
By contrast, for banana, waveform, image, and ijcnn1, all nonlinear classifiers clearly outperform logistic regression. 
Note PBDN1, which clearly reduces the classification errors of logistic regression, performs similarly to both AMM and CPM. These results are not surprising as CPM, closely related to AMM, uses a convex polytope, defined as the intersection of multiple hyperplanes, to enclose one class, whereas the classification decision boundaries of PBDN1 can be bounded within a convex polytope that encloses negative examples. Note that the number of hyperplanes are automatically inferred from the data by PBDN1, thanks to the inherent shrinkage mechanism of the gamma process, whereas 
 the ones of AMM and CPM are both selected via cross validations. While PBDN1 can partially remedy their sensitivity to how the data are labeled by combining the results obtained under two opposite labeling settings, 
the decision boundaries of the two iSHMs and those of both AMM and CPM are still restricted to a confined space related to a single convex polytope, which may be used to explain why on banana, image, and ijcnn1, 
they all clearly underperform a PBDN with more than one hidden layer.

As shown in Tab. \ref{tab:Error}, DNN (8-4) clearly underperforms DNN (32-16) in terms of classification accuracy on both image and ijcnn1, indicating that having 8 and 4 hidden units for the first and second hidden layers, respectively, is far from enough for DNN to provide a sufficiently high nonlinear modeling capacity for these two datasets. Note that the equivalent number of hyperplanes for DNN ($K_1, K_2$), a two-hidden-layer DNN with $K_1$ and $K_2$ hidden units in the first and second hidden layers, respectively, is computed as
$
[(V+1)K_1+K_1K_2]/(V+1).
$
Thus the computational complexity quickly increases as the network size increases. For example, 
DNN (8-4) is comparable to PBDN1 and PBDN-AIC 
in terms of out-of-sample-prediction computational complexity, as shown in Tabs. \ref{tab:Error} and \ref{tab:K}, but it clearly underperforms all of them in terms of classification accuracy, as shown in Tab.~\ref{tab:Error}. 
While DNN (128-64) performs well in terms of classification accuracy, as shown in Tab. \ref{tab:Error}, its out-of-sample-prediction computational complexity becomes clearly higher than the other algorithms with comparable or better accuracy, such as RVM and PBDN, as shown in Tab. \ref{tab:K}. In practice, however, the search space for a DNN with two or more hidden layers is enormous, making it difficult to determine a network that is neither too large nor too small to achieve a good compromise between fitting the data well and having low complexity for both training and out-of-sample prediction. E.g., while DNN (128-64) could further improve the performance of DNN (32-16) on these two datasets, it uses a much larger network and clearly higher computational complexity for out-of-sample prediction. 

We show the inferred number of active support hyperplanes by PBDN in a single random trial in Figs. \ref{fig:rk}-\ref{fig:SS-softplus_rk4}. 
For PBDN, the computation in both training and out-of-sample prediction also increases in $T$, the network depth. It is clear from Tab. \ref{tab:Error} that increasing $T$ from 1 to 2 generally leads to the most significant improvement if there is a clear advantage of increasing $T$, and once $T$ is sufficiently large, further increasing $T$ leads to small performance fluctuations  but does not appear to lead to clear overfitting. As shown in Tab. \ref{tab:depth}, the use of the 
 AIC based 
 greedy model selection criterion eliminates the need to tuning the depth $T$, allowing it to be inferred from the data. 
 Note 
we have tried stacking CPMs as how we stack iSHMs, but found that the accuracy often quickly deteriorates rather than improving. E.g., 
 for CPMs with (2, 3, or 4) layers, the error rates become (0.131, 0.177, 0.223) on waveform, and (0.046, 0.080, 0.216) on image. The reason could be that CPM infers redundant unweighted hyperplanes that lead to strong multicollinearity for the covariates of deep layers. 
 
Note on each given data, we have tried training a DNN with the same network architecture inferred by a PBDN. While a DNN jointly trains all its hidden layers, it provides no performance gain over the corresponding PBDN. More specifically, the DNNs using the network architectures inferred by PBDNs with AIC-Gibbs, AIC$_{\epsilon=0.01}$-Gibbs, AIC-SGD, and $\text{AIC$_{\epsilon=0.01}$-SGD}$, have the mean of SVM normalized errors as 1.047, 1.011, 1.076, and 1.144, respectively. These observations suggest the efficacy of the greedy-layer-wise training strategy of the PBDN, which requires no cross-validation.

For out-of-sample prediction, the computation of a classification algorithm generally increases linearly in the number of support hyperplanes/vectors. Using logistic regression with a single hyperplane for reference, we summarize the computation complexity 
in Tab. \ref{tab:Error}, which indicates that in comparison to SVM that consistently requires the most number of support vectors, PBDN often requires significantly less time for predicting the class label of a new data sample. For example, for out-of-sample prediction for the image dataset, as shown in Tab. \ref{tab:K}, 
on average SVM uses about 212 support vectors, whereas on average 
 PBDNs with one to five hidden layers use about 13, 16, 29, 50, and 64 hyperplanes, respectively, and PBDN-AIC 
 uses about 22 
 hyperplanes, 
 showing that in comparison to kernel SVM, PBDN could be much more computationally efficient in making out-of-sample prediction.

 \section{ {Conclusions}}

The infinite support hyperplane machine (iSHM), 
which interacts countably infinite non-negative weighted hyperplanes via a noisy-OR mechanism, is employed as the building unit to greedily construct a capacity-regularized parsimonious Bayesian deep network (PBDN). 
iSHM has an inductive bias in fitting the positively labeled data, 
and employs the gamma process to infer a parsimonious set of active 
hyperplanes to enclose negatively labeled data within a convex-polytope bounded space. 
Due to the inductive bias and label asymmetry, iSHMs are trained in pairs to ensure a sufficient coverage of the covariate space occupied by the data from both classes. The sequentially trained iSHM pairs can be stacked into PBDN, a feedforward deep network that gradually enhances its modeling capacity as the network depth increases, achieving high accuracy while having low computational complexity for out-of-sample prediction. 
PBDN can be trained using either Gibbs sampling that is suitable for quantifying posterior uncertainty, or SGD based MAP inference that is scalable to big data.  
 One may potentially construct PBDNs for regression analysis of count, categorical, and continuous response variables by following the same three-step strategy: constructing a nonparametric
Bayesian model that infers the number of components for the task of interest, greedily adding layers one at a time,
and using a forward model selection criterion to decide how deep is deep enough. For the first step, the recently proposed Lomax distribution based racing framework \cite{LDR} could be a promising candidate for both categorical and non-negative response variables, and Dirichlet process mixtures of generalized linear models \cite{hannah2011dirichlet} could be promising candidates for continuous response variables and many other types of variables via appropriate 
link functions.

\clearpage
 
\subsubsection*{Acknowledgments}
M. Zhou acknowledges the support of Award IIS-1812699 from
the U.S. National Science Foundation, the support of NVIDIA Corporation with the donation of the Titan Xp GPU used for this research, and the computational support of Texas Advanced Computing Center.


\small
\bibliographystyle{ieeetr.bst}
\bibliography{References112016,References052016}

\normalsize

\hfill \break
\newpage
\appendix
\begin{center}
\Large
\textbf{
Parsimonious Bayesian deep networks: supplementary material}\\\vspace{1mm}
\normalsize{Mingyuan Zhou}
\end{center}

\begin{algorithm}[h!]
\small
 \caption{\small Gibbs sampling (\emph{MAP via SGD}) for iSHM.\newline
 \textbf{Inputs:} $y_i$: the observed labels, $\xv_i$: covariate vectors, $K_{\max}$: the upper-bound of the number of hyperplanes, 
 $I_{Prune}$: the set of 
 iterations at which 
 the hyperplanes with $\sum_i b_{ik} = 0$ 
 are pruned. 
\newline
\textbf{Outputs:} the maximum likelihood sample (\emph{MAP solution}) that includes $K $ active support hyperplanes $\betav_{k}$ and their weights $r_k$, where hyperplane $k$ is defined as active if $\sum_i b_{ik} > 0$.
 }\label{alg:1}
 \begin{algorithmic}[1] 
 \STATE \text{
 Initialize the model parameters with $\betav_{k}=0$,} \text{$K=K_{\max}$}, \text{and $r_k=1/K_{\max}$.}
 \FOR{\text{$iter=1:maxIter$ 
 }} 
 \IF{Gibbs sampling}
 \STATE
 \text{Sample $m_i$}; \text{Sample $\{m_{ik}\}_k$}; 
 \ELSE
 \STATE
 Retrive a mini batch
\IF{$iter \in I_{Prune}$ }
\STATE
Sample $b_{ik}\sim\mbox{Bernoulli}(p_{ik}),~p_{ik} = 
1-e^{-r_k\ln(1+e^{\betav_k'\xv_i})}$
\IF{$y_i=1$ and $\sum_{k}b_{ik} = 0$}
\STATE
$(b_{i1},\ldots,b_{iK})\sim\mbox{Multinomial}\left(1,\frac{p_{i1}}{\sum_{k=1}^K p_{ik}},\ldots,\frac{p_{iK}}{\sum_{k=1}^K p_{ik}}\right)$
\ENDIF
\ENDIF
\ENDIF
\FOR{\text{$k=1,\ldots,K$}} 
 \IF{Gibbs sampling}
 \STATE
 Sample $l_{ik}$, $\omega_{ik}$, $\betav_k$, $b_{\beta k}$, $r_k$, and $\theta_{ik}$; 
 \ELSE
 \STATE
 SGD update of $\betav_k$ and $\ln r_k$
 \ENDIF
 \IF{$iter \in I_{Prune}$ and $\sum_i b_{ik} = 0$}
	\STATE Prune Expert $k$ and reduce $K$ by one 
\ENDIF
\ENDFOR

%
 \ENDFOR
 \end{algorithmic}
 \normalsize
\end{algorithm}%

\begin{algorithm}[h!]
\small
 \caption{\small Greedy layer-wise training for PBDN. 
 }\label{alg:2}
 \begin{algorithmic}[1] 
\STATE Denote $\xv_i^{(1)} = \xv_i$ and $ \mbox{AIC}(T)=\infty$.
 \FOR{\text{Layer $t=1:\infty$ 
 }} 
 \STATE Train an iSHM to predict $y_i$ given $\xv_i^{(t)}$;
 \STATE Train an iSHM to predict $y_i^*=1-y_i$ given $\xv_i^{(t)}$;
 \STATE Compute $P(y_i\given \xv_i^{(t)})$, $P(y^*_i\given \xv_i^{(t)})$, and $\mbox{AIC}(t)$;
 \IF{$\mbox{AIC}(t)<\mbox{AIC}(t-1)$}
 \STATE Combine two iSHMs to produce $\xv_i^{(t+1)}$;
 \ELSE
 \STATE Use the first $(t-1)$ iSHM pairs to compute the conditional class probability $P(y_i\given \xv_i)$;
 \ENDIF
 \ENDFOR
 \end{algorithmic}
 \normalsize
\end{algorithm}%

\begin{figure}[h] 
\begin{center}
  \includegraphics[width=0.4\columnwidth]{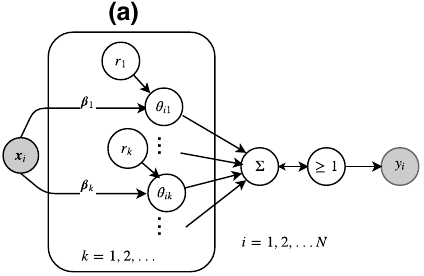}\,\,\,\, \,\,\,\,\,\,\,\,\,\includegraphics[width=0.5\columnwidth]{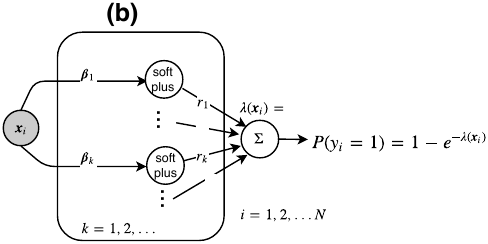}\\
   \includegraphics[width=0.88\columnwidth]{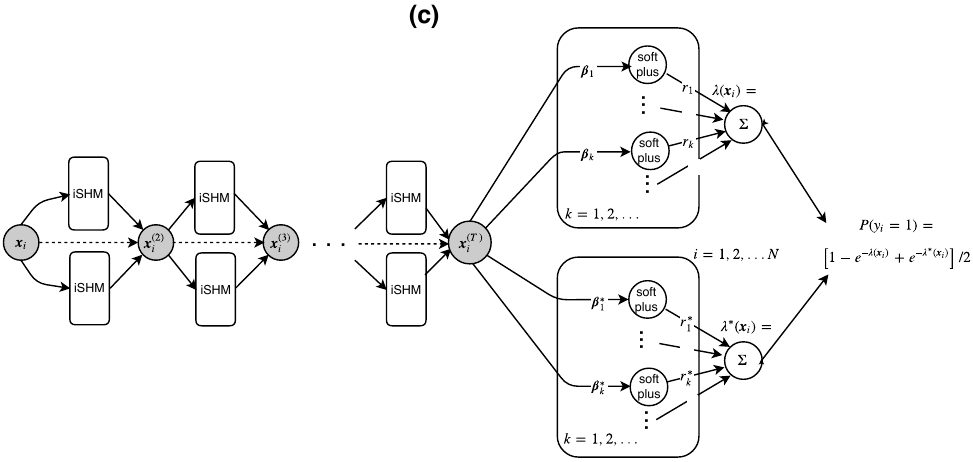} 
\end{center}
\vspace{-.1cm}
\caption{\small \label{fig:DSN_graph}
Graphical representation of (a) the generative model for iSHM, (b) the asymmetric conditional class probability function, and (c) a $T$-hidden-layer PBDN that stacks a pair of iSHMs after another to construct a feedforward deep network.\vspace{-0mm}
 }
 \end{figure}

\section{ {Proofs}}

 \begin{proof}[Proof of Theorem \ref{thm:sum_polytope}]
 Since $\sum_{k'\neq k}r_{k'} \ln(1+e^{\xv_i' \betav_{k'}}) \ge 0$ a.s., if \eqref{eq:sum_ineuqality} is true, then 
$
 r_k \ln(1+e^{\xv_i' \betav_{k}}) \le -\ln(1-p_0)
$
 a.s. for all $k\in\{1,2,\ldots\}$. Thus if \eqref{eq:sum_ineuqality} is true, then \eqref{eq:convex_polytope} is true a.s., which means the set of solutions to \eqref{eq:sum_ineuqality} is included in the set of solutions to \eqref{eq:convex_polytope}.
 \end{proof}

\section{Related multi-hyperplane 
models}\label{sec:CPM}
Generalizing the construction of multiclass support vector machines in Crammer and Singer \cite{crammer2002algorithmic},
the idea of combining multiple hyperplanes to define complex classification decision boundaries has been discussed before \cite{aiolli2005multiclass,wang2011trading,manwani2010learning, manwani2011polyceptron,kantchelian2014large}. 
In particular,
the convex polytope machine (CPM)  \cite{kantchelian2014large}
exploits the idea of learning a convex polytope 
to separate one class from the other.
From this point of view, the proposed iSHM is related to the CPM as its decision boundary can be explicitly bounded by a convex polytope that encloses the data labeled as zeros, as described in Theorem \ref{thm:sum_polytope} and illustrated in Fig. \ref{fig:circle_1}. Distinct from the CPM that uses a convex polytope as its decision boundary, and provides no probability estimates for class labels and no principled ways to set its number of equally-weighted hyperplanes, iSHM makes its decision boundary smoother than the corresponding bounding convex polytope, as shown in Figs. \ref{fig:circle_1} (c) and (f), by using more complex interactions between hyperplanes than simple intersection. iSHM also provides probability estimates for its labels, and supports countably infinite differently-weighted hyperplanes with the gamma process. 
In addition, to solve its non-convex objective function, the CPM relies on heuristics to 
force the learning of 
 each hyperplane as a convex optimization problem, whereas iSHM can use Bayesian inference, 
 in which each data point assigns a binary indicator to each hyperplane. 
Moreover, iSHM pair is used as the building unit to construct PBDN to quickly boost the modeling power. 

\section{ {Gibbs sampling update equations}}\label{app:sampling}

To begin with, we sample the latent count $m_i$ and then partitions it into $m_{ik}$ for different hyperplanes, where the value of $m_i$ is related to how likely $y_i=1$ in the posterior, and the ratio $m_{ik}/m_i$ is related to how much does expert $k$ contribute to the overall cause of $y_i=1$. Below we first describe the Gibbs sampling update equations for $m_i$ and $m_{ik}$. 
 
 \textbf{\emph{Sample $m_{i}$}}. Denote $\theta_{i\cdotv} = \sum_{k=1}^K \theta_{ik}$\,. 
Since $m_{i}=0$ a.s. given $y_{i}=0$ and $m_{i}\ge 1$ given $y_{i}=1$, and in the prior $m_{i}\sim\mbox{Pois}(\theta_{i\cdotv})$, following the inference in Zhou \cite{EPM_AISTATS2015}, we can sample $m_{i}$ as 
\beq
(m_{i}\,|\,-)\sim y_{i}\mbox{Pois}_+\left(\theta_{i\cdotv}\right),\label{eq:m_i}
\eeq 
where $m \sim \mbox{Pois}_+(\theta)$ denotes a draw from the zero-truncated Poisson distribution.

\textbf{\emph{Sample $m_{ik}$}}. Once the latent counts $m_{ik}$ are known, it becomes clear on how much expert $k$ contributes to the cause of $y_i=1$. Since letting $m_{i}=\sum_{k=1}^K m_{ik},~m_{ik}\sim\mbox{Pois}(\theta_{ik})$ is equivalent in distribution to letting $(m_{i1},\ldots,m_{iK})\,|\,m_{i}\sim\mbox{Mult}\left(m_{i},{\theta_{i1}}/{\theta_{i\cdotv}},\ldots,{\theta_{iK}}/{\theta_{i\cdotv}}\right),~m_{i}\sim \mbox{Pois}\left( \theta_{i\cdotv}\right)$, similar to Dunson and Herring  \cite{Dunson05bayesianlatent} and Zhou et al.\cite{BNBP_PFA_AISTATS2012}, we sample $m_{ik}$ as
\beq
(m_{i1},\ldots,m_{iK}\,|\,-)\sim\mbox{Mult}\left(m_{i},{\theta_{i1}}/{\theta_{i\cdotv}},\ldots,{\theta_{iK}}/{\theta_{i\cdotv}}\right).\label{eq:m_i_1}
\eeq




The key remaining problem is to infer $\betav_k$. 
Note that marginalizing out $\theta_{ik}$ from \eqref{eq:BerPo1} leads to
\beq 
m_{ik}\sim{\mbox{NB}}\big[r_{k},{1}/{(1+e^{-\xv_i'\betav_{k}})}\big],\eeq where $m\sim\mbox{NB}(r,p)$ represents a negative binomial (NB) distribution 
with shape $r$ and probability $p$. 
We thus
exploit the augmentation 
techniques developed 
for the NB distribution in Zhou et al. \cite{NBP2012} to sample $r_{k}$, and these developed for logistic regression in Polson et al. \cite{LogitPolyGamma} and further generalized to NB regression in Zhou et al. \cite{LGNB_ICML2012} and Polson et al.\cite{polson2013bayesian} to sample $\betav_{k}$. 
We outline Gibbs sampling in Algorithm \ref{alg:1}, where to save computation, 
we set $K_{\max}$ as the upper-bound of the number of experts and
prune the experts assigned with zero counts during MCMC iterations. 
 Note that except for the sampling of $\{m_{ik}\}_k$, the sampling of all the other parameters of different experts are embarrassingly parallel.

%
Gibbs sampling via data augmentation and marginalization proceeds as follows.
%

\textbf{\emph{Sample $\betav_{k}$.}}
Using data augmentation for NB regression, as in Zhou et al. \cite{LGNB_ICML2012} and \cite{polson2013bayesian}, 
we denote $\omega_{ik}$ as a random variable drawn from the Polya-Gamma ($\mbox{PG}$) distribution \cite{LogitPolyGamma} as
$
\omega_{ik}\sim\mbox{PG}\left(m_{ik}+\theta_{ik},~0 \right), 
\notag
$
under which we have 
$\E_{\omega_{ik}} \left[\exp(-\omega_{ik}(\psi_{ik})^2/2)\right] = {\cosh^{-(m_{ik}+r_k)}(\psi_{ik}/2)}$. 
Since $m_{ik}\sim{\mbox{NB}}\big[r_{k},{1}/{(1+e^{-\xv_i'\betav_{k}})}\big]$, the likelihood of $\psi_{ik}:= \xv_i \betav_{k} $ can be expressed as
\begin{align}
&\mathcal{L}(\psi_{ik})\propto \frac{{(e^{\psi_{ik}})}^{m_{ik}}}{{(1+e^{\psi_{ik}})}^{m_{ik}+\theta_{ik}}} \notag\\
& = \frac{2^{-(m_{ik}+\theta_{ik})}\exp({\frac{m_{ik}-\theta_{ik}}{2}\psi_{ik}})}{\cosh^{m_{ik}+\theta_{ik}}(\psi_{ik}/2)}\nonumber\\ &\propto \exp\left({\frac{m_{ik}-\theta_{ik}}{2}\psi_i}\right)\E_{\omega_{ik}}\left[\exp[-\omega_{ik}(\psi_{ik})^2/2]\right]. \notag
\end{align} 
Combining the likelihood
$
\mathcal{L}(\psi_{ik}, \omega_{ik})\propto \exp\left({\frac{m_{ik}-\theta_{ik}}{2}\psi_i}\right)\exp[-\omega_{ik}(\psi_{ik})^2/2]
$ and the prior, 
we sample auxiliary Polya-Gamma random variables $\omega_{ik}$ as
\beq
(\omega_{ik}\,|\,-)\sim\mbox{PG}\left(m_{ik}+r_k,~\xv_i' \betav_{k} \right),\label{eq:omega_i}
\eeq
conditioning on which we sample $\betav_{k}$ as
\beqs
&(\betav_{k}\,|\,-)\sim\mathcal{N}(\muv_{k}, \Sigmamat_{k}),\notag\\
&\displaystyle\Sigmamat_{k} = \left( \mbox{diag}(\alpha_{1k},\ldots,\alpha_{Vk}) + \sum \nolimits_{i} \omega_{ik}\xv_i\xv_i' \right)^{-1}, \notag\\
&\displaystyle \muv_{k} = \Sigmamat_{k}\left[ \sum \nolimits_{i} \left( \frac{m_{ik}-r_k}{2}\right)\xv_i\right]. \label{eq:beta}
\eeqs
Note to sample from the Polya-Gamma distribution, we use a fast and accurate approximate sampler of Zhou \cite{SoftplusReg_2016} that  matches the first two moments of the true distribution; we set the truncation level of that sampler as five.

 \textbf{\emph{Sample $\theta_{ik}$.}} Using the gamma-Poisson conjugacy, we sample $ \theta_{ik}$ 
 as
 \beq
(\theta_{ik}\,|\,-)\sim\mbox{Gamma}\left(r_k+m_{ik},\, \frac{e^{\xv_i'\betav_{k}}}{1+e^{\xv_i'\betav_{k}}}
\right).\label{eq:theta}
\eeq

 \textbf{\emph{Sample $\alpha_{vk}$.}} We sample $\alpha_{vk}$ as
 \beq
 (\alpha_{vk} \,|\,-)\sim\mbox{Gamma}\left(a_\beta+ \frac{1}{2},\frac{1}{b_{\beta k}+\frac{1}{2}\beta_{vk}^2}\right).\label{eq:alpha}
 \eeq
 
 \textbf{\emph{Sample $b_{\beta k}$.}} We sample $b_{\beta k}$ as
 \beq
 (b_{\beta k} \,|\,-)\sim\mbox{Gamma}\left(e_0+ a_\beta(V+1),\frac{1}{f_0+\sum_{v}\alpha_{vk}}\right).\label{eq:alpha}
 \eeq
 \textbf{\emph{Sample $c_0$.}} 
 We sample $c_0$ as
\beq
(c_0\,|\,-)\sim\mbox{Gamma}\left(e_0+\gamma_0,\frac{1}{f_0+ \sum_k r_k} 
\right).\label{eq:c0}
\eeq

 \textbf{\emph{Sample $l_{ik}$.}} We sample $l_{ik}$ using the Chinese restaurant table (CRT) distribution \cite{NBP2012}~as
\beq
(l_{ik}\,|\,-) \sim\mbox{CRT}(m_{i k} , r_k).
\eeq

 \textbf{\emph{Sample $\gamma_0$ and $r_{k}$.}} 
 Let us denote 
$$
\tilde{p}_{k} := {\sum \nolimits_{i}\ln(1+e^{\xv_i'\betav_{k}})}\Big/{\big[c_0+\sum \nolimits_{i}\ln(1+e^{\xv_i'\betav_{k}})\big]}.
$$
Given $l_{\cdotv k} = \sum_{i }l_{ik}$, 
we first sample 
\beq
(\tilde{l}_{k}\,|\,-) \sim\mbox{CRT}(l_{\cdotv k} , \gamma_0/K).
\eeq
With these latent counts, 
we then sample $\gamma_0$ and $r_{k}$ as
\begin{align}
&(\gamma_0 \,|\, -) \sim\mbox{Gamma}\left(a_0+{\tilde{l}}_{\cdotv},\,\frac{1}{b_0-\frac{1}{K}\sum_{k}\ln(1-\tilde{p}_{k})}
\right),
\notag\\
&(r_{k}\,|\,-)\sim\mbox{Gamma}\left(\frac{\gamma_0}{K}+ l_{\cdotv k}, \, \frac{1}{c_0
+\ln(1+e^{\xv_i'\betav_k})
}\label{eq:r_k}
\right).
\end{align}

\section{ {Experimental settings and additional results}}\label{app:setting}
Following Tipping \cite{RVM}, we consider the following datasets: 
banana, breast cancer, titanic, waveform, german, and image. For each of these six datasets, we consider the first ten predefined random training/testing partitions, and report both the sample mean and standard deviation of the testing classification errors. 
Since these datasets, originally provided by R{\"a}tsch  et al. \cite{ratsch2001soft}, were no longer available on the authors' websites, we use the version provided by Diethe \cite{Diethe15}. 
We also consider two additional benchmark datasets: ijcnn1 and a9a \cite{chang2010training,wang2011trading,kantchelian2014large}. 
Instead of using a fixed training/testing partition that comes with ijcnn1 and a9a, for a more rigorous comparison, we use the $(i, i + 10, \ldots)$th observations as training and the remaining ones as testing, and run five independent random trials with $i \in \{1, 2, 3, 4, 5\}$.

We use the $L_2$ regularized logistic regression provided by the LIBLINEAR package \cite{REF08a} to train a linear classifier, where a bias term is included and the regularization parameter $C$ is five-fold 
 cross-validated on the training set from $(2^{-10}, 2^{-9},\ldots, 2^{15})$.

For kernel SVM, a Gaussian RBF kernel is used and three-fold cross validation is used to tune both the regularization parameter $C$ and kernel width on the training set. We use the LIBSVM package \cite{LIBSVM}, where we three-fold 
 cross-validate both the regularization parameter $C$ and kernel-width parameter $\gamma$ on the training set from $(2^{-5}, 2^{-4},\ldots, 2^{5})$, and choose the default settings for all the other parameters. 
 
For RVM, instead of directly quoting the results from Tipping \cite{RVM}, which only reported the mean but not standard deviation of the classification errors for each of the first six datasets in Tab. \ref{tab:data}, we use the matlab code\footnote{\url{http://www.miketipping.com/downloads/SB2_Release_200.zip}} provided by the author, using a Gaussian RBF kernel whose kernel width is three-fold cross-validated on the training set from $(2^{-5},2^{-4.5},\ldots,2^{5})$ for both ijcnn1 and a9a and from $(2^{-10},2^{-9.5},\ldots,2^{10})$ for all the others. 
 
 We consider adaptive multi-hyperplane machine (AMM) \cite{wang2011trading}, as implemented in the BudgetSVM\footnote{\url{http://www.dabi.temple.edu/budgetedsvm/}} (Version 1.1) software package \cite{BudgetSVM}. We consider the batch version of the algorithm. Important parameters of the AMM include both the regularization parameter $\lambda$ and training epochs $E$. As also observed in Kantchelian et al. \cite{kantchelian2014large}, we do not observe the testing errors of AMM to strictly decrease as $E$ increases. Thus,
in addition to cross validating the regularization parameter $\lambda$ on the training set from $\{10^{-7}, 10^{-6},\ldots, 10^{-2}\}$, as done in Wang et al. \cite{wang2011trading}, for each $\lambda$, we try $E\in\{5,10,20,50,100\}$ sequentially until the 
cross-validation error begins to decrease, $i.e.$, under the same $\lambda$, we choose $E=20$ if the cross-validation error of $E=50$ is greater than that of $E=20$. 
We use the default settings for all the other parameters.

\begin{table*}[h!]
\footnotesize
\begin{center}
\caption{\small Binary classification datasets used in experiments, where 
$V$ is the feature dimension.
}\label{tab:data}
\begin{tabular}{c | c c c c c c c c c}
\toprule
Dataset & banana & breast cancer & titanic & waveform & german & image & ijcnn1 & a9a \\
\midrule
Train size &400 &200&150&400&700&1300 &14,169& 4,884 \\
Test size &4900 &77&2051&4600&300&1010 & 127,522& 43,958 \\
$V$ & 2 &9 & 3 &21 &20 &18&22& 123\\
\bottomrule
\end{tabular}
\end{center}
\end{table*}%

 We consider the convex polytope machine (CPM) \cite{kantchelian2014large}, using the python code\footnote{\url{https://github.com/alkant/cpm}} provided by the authors. Important parameters of the CPM include the entropy parameter $h$, regularization factor $C$, and number of hyperplanes $K$ for each side of the CPM (2$K$ hyperplanes in total). Similar to the setting of Kantchelian et al. \cite{kantchelian2014large}, we first fix $h=0$ and select the best regularization factor $C$ from $\{10^{-4},10^{-3},\ldots,10^{0}\}$ using three-fold cross validation on the training set. For each $C$, we try $K\in\{1,3,5,10,20,40,60,80,100\}$ sequentially until the 
cross-validation error begins to decrease. With both $\lambda$ and $K$ selected, we then select $h$ from $\{0,\ln(K/10), \ln(2K/10),
 \ldots,\ln(9K/10)\}$. For each trial, we consider $10$ million iterations in cross-validation and $32$ million iterations in training with the cross-validated parameters. Note different from Kantchelian et al. \cite{kantchelian2014large}, which suggests that the error rate decreases as $K$ increases, 
 we cross-validate $K$ as 
 we have found that the testing errors of the CPM may increase once it increases over certain limits. 

\begin{table}[h] 
\centering 
\caption{\small 
Analogous table to Tab. \ref{tab:Error} that shows the comparison of the (equivalent) number of support vectors/hyperplanes between various algorithms. }
\label{tab:K} 

 \centering
 \resizebox{1\textwidth}{!}{                                                                                                                                                                                     
\begin{tabular}{c|ccccccccccccccc}                                                                                                                                                     
\toprule                                                                                                                                                                                                                                   
  & LR & SVM & RVM & AMM & CPM & DNN & DNN  & DNN  & PBDN1 & PBDN2 & PBDN4 & AIC & AIC$_\epsilon$ & AIC & AIC$_\epsilon$\\  
   &  &  &  &  &  & (8-4) &(32-16) & (128-64) &  &  &  & Gibbs & Gibbs & SGD & SGD \\                                                          
\midrule                                                                                                                                                                                       
banana & $1.0$ & $129.2$ & $22.3$ & $9.5$ & $14.2$ & $18.7$ & $202.7$ & $2858.7$ & $7.6$ & $9.4$ & $17.2$ & $10.0$ & $10.0$ & $57.5$ & $45.9$ \\                                                                                                                                                                                                                               
  & $\pm 0.0$ & $\pm 32.8$ & $\pm 26.0$ & $\pm 2.8$ & $\pm 7.9$ & $\pm 0.0$ & $\pm 0.0$ & $\pm 0.0$ & $\pm 2.6$ & $\pm 1.1$ & $\pm 3.6$ & $\pm 1.1$ & $\pm 1.1$ & $\pm 19.1$ & $\pm 15.2$ \\ 
\midrule                                                                                                                                                                                   
breast cancer & $1.0$ & $115.1$ & $24.8$ & $13.4$ & $5.2$ & $11.2$ & $83.2$ & $947.2$ & $7.1$ & $9.1$ & $24.4$ & $7.1$ & $7.1$ & $18.2$ & $6.5$ \\                                                                                                                                                                                                                                
  & $\pm 0.0$ & $\pm 11.2$ & $\pm 28.3$ & $\pm 0.8$ & $\pm 3.7$ & $\pm 0.0$ & $\pm 0.0$ & $\pm 0.0$ & $\pm 2.8$ & $\pm 1.5$ & $\pm 6.0$ & $\pm 2.8$ & $\pm 2.8$ & $\pm 10.3$ & $\pm 1.0$ \\  
\midrule                                                                                                                                                                                       
titanic & $1.0$ & $83.4$ & $5.1$ & $14.9$ & $4.8$ & $16.0$ & $160.0$ & $2176.0$ & $4.2$ & $7.0$ & $12.8$ & $4.2$ & $4.2$ & $4.2$ & $4.2$ \\                                                                                                                                                                                                                                     
  & $\pm 0.0$ & $\pm 13.3$ & $\pm 3.0$ & $\pm 3.1$ & $\pm 3.3$ & $\pm 0.0$ & $\pm 0.0$ & $\pm 0.0$ & $\pm 0.4$ & $\pm 2.5$ & $\pm 1.3$ & $\pm 0.4$ & $\pm 0.4$ & $\pm 0.4$ & $\pm 0.4$ \\    
\midrule                                                                                                                                                                                  
waveform & $1.0$ & $147.0$ & $21.1$ & $9.5$ & $4.4$ & $9.5$ & $55.3$ & $500.4$ & $5.5$ & $10.6$ & $35.1$ & $12.5$ & $12.3$ & $10.4$ & $7.2$ \\                                                                                                                                                                                                                                    
  & $\pm 0.0$ & $\pm 38.5$ & $\pm 11.0$ & $\pm 1.2$ & $\pm 2.8$ & $\pm 0.0$ & $\pm 0.0$ & $\pm 0.0$ & $\pm 1.7$ & $\pm 2.1$ & $\pm 6.3$ & $\pm 7.6$ & $\pm 8.2$ & $\pm 1.8$ & $\pm 2.6$ \\   
\midrule                                                                                                                                                                                  
german & $1.0$ & $423.6$ & $11.0$ & $18.8$ & $2.8$ & $9.5$ & $56.4$ & $518.1$ & $8.2$ & $14.1$ & $40.1$ & $10.2$ & $12.1$ & $46.4$ & $15.6$ \\                                                                                                                                                                                                                                
  & $\pm 0.0$ & $\pm 55.0$ & $\pm 3.2$ & $\pm 1.8$ & $\pm 1.7$ & $\pm 0.0$ & $\pm 0.0$ & $\pm 0.0$ & $\pm 2.3$ & $\pm 4.5$ & $\pm 7.9$ & $\pm 6.0$ & $\pm 7.6$ & $\pm 10.5$ & $\pm 1.3$ \\   
\midrule                                                                                                                                                                                      
image & $1.0$ & $211.6$ & $35.8$ & $10.5$ & $14.6$ & $9.7$ & $58.9$ & $559.2$ & $12.9$ & $16.4$ & $50.4$ & $21.7$ & $24.1$ & $22.4$ & $19.4$ \\                                                                                                                                                                                                                             
  & $\pm 0.0$ & $\pm 47.5$ & $\pm 9.2$ & $\pm 1.1$ & $\pm 7.5$ & $\pm 0.0$ & $\pm 0.0$ & $\pm 0.0$ & $\pm 1.2$ & $\pm 1.5$ & $\pm 3.1$ & $\pm 5.9$ & $\pm 6.6$ & $\pm 2.9$ & $\pm 2.2$ \\    
\midrule                                                                                                                                                                                    
ijcnn1 & $1.0$ & $835.2$ & $83.4$ & $8.2$ & $38.0$ & $9.4$ & $54.3$ & $484.2$ & $28.4$ & $33.9$ & $89.5$ & $33.9$ & $67.3$ & $45.0$ & $41.6$ \\                                                                                                                                                                                                                              
  & $\pm 0.0$ & $\pm 139.5$ & $\pm 45.2$ & $\pm 0.8$ & $\pm 8.4$ & $\pm 0.0$ & $\pm 0.0$ & $\pm 0.0$ & $\pm 0.9$ & $\pm 1.4$ & $\pm 2.6$ & $\pm 1.4$ & $\pm 19.5$ & $\pm 13.2$ & $\pm 7.3$ \\
\midrule                                                                                                                                                                                     
a9a & $1.0$ & $1884.2$ & $26.2$ & $28.0$ & $2.8$ & $8.3$ & $36.1$ & $194.1$ & $24.4$ & $39.1$ & $198.7$ & $24.4$ & $24.4$ & $48.1$ & $26.0$ \\                                                                                                                                                                                                                                   
  & $\pm 0.0$ & $\pm 79.7$ & $\pm 4.8$ & $\pm 3.9$ & $\pm 1.8$ & $\pm 0.0$ & $\pm 0.0$ & $\pm 0.0$ & $\pm 3.9$ & $\pm 6.1$ & $\pm 23.9$ & $\pm 3.9$ & $\pm 3.9$ & $\pm 1.9$ & $\pm 1.4$ \\   
\bottomrule   
\begin{tabular}{@{}c@{}}\scriptsize Mean of SVM\\ \scriptsize normalized $K$\end{tabular} & $0.006$ & $1.000$ & $0.113$ & $0.069$ & $0.046$ & $0.073$ & $0.635$ & $8.050$ & $0.042$ & $0.060$ & $0.160$ & $0.057$ & $0.064$ & $0.128$ & $0.088$ \\                                                                                                                                                                                                                                                                                                                                        
\end{tabular} }

\vspace{-3mm} 
\end{table}

 \begin{figure}[!h]
\begin{center}
 \includegraphics[width=0.9\columnwidth]{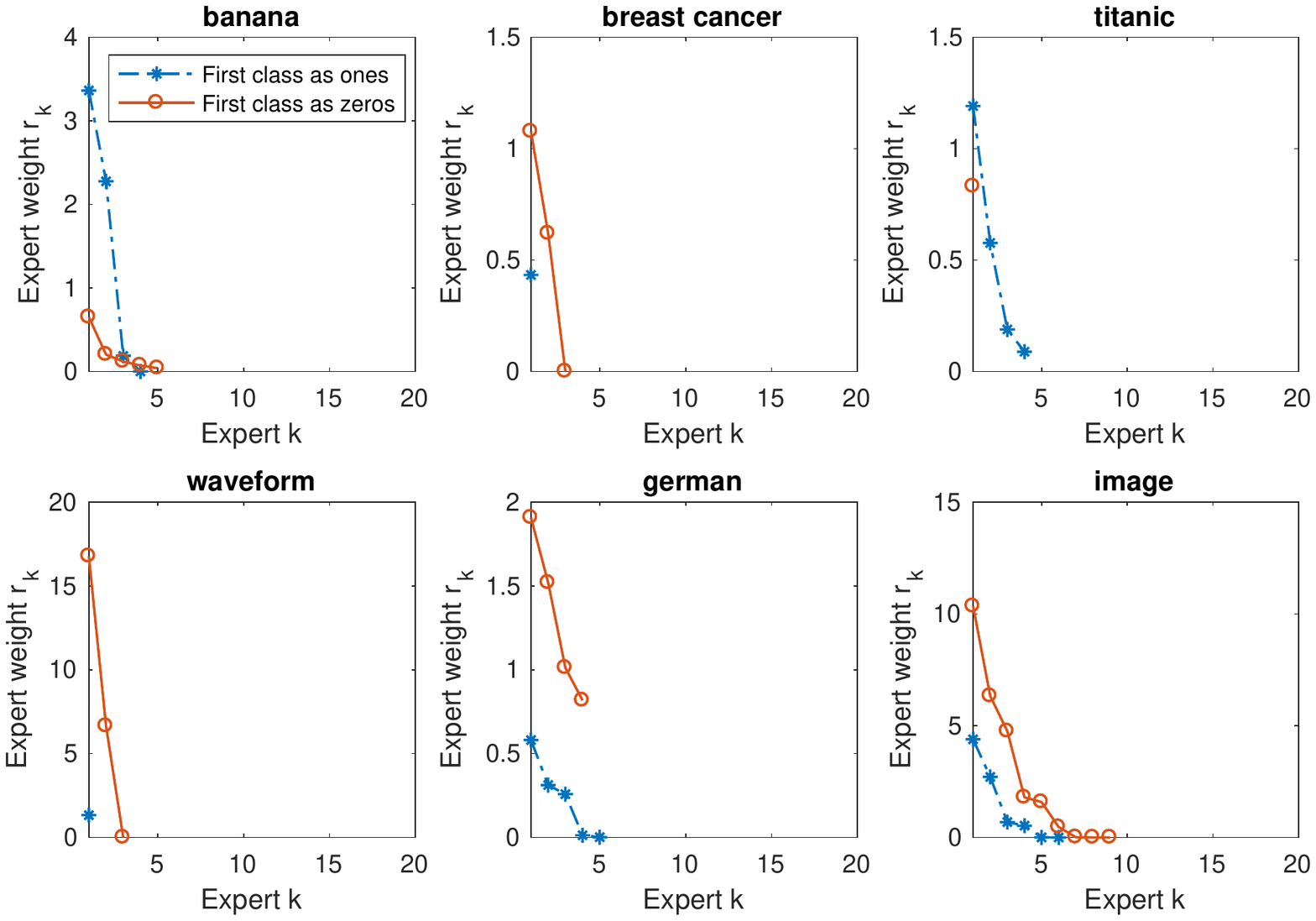}
\end{center}
\caption{\small\label{fig:rk}
The inferred weights of the active experts (support hyperplanes) of iSHM pair of the single-hidden-layer deep softplus network (PBDN-1), ordered from left to right according to their weights, on six benchmark datasets, based on the maximum likelihood sample of a single random trial. 
}

\begin{center}
 \includegraphics[width=0.9\columnwidth]{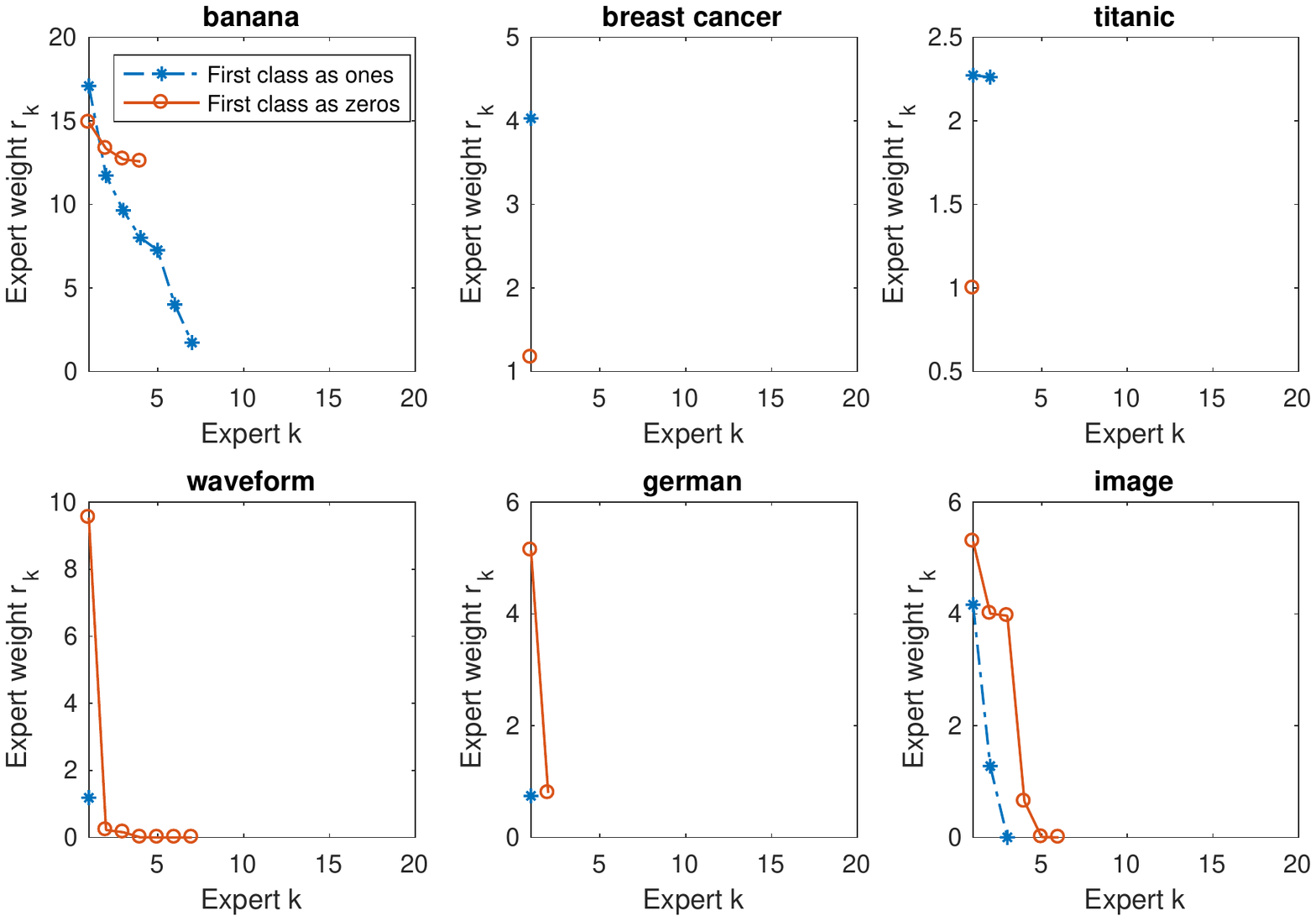}
\end{center}
\caption{\small\label{fig:SS-softplus_rk2}
Analogous to 
 Fig. \ref{fig:rk} for the most recently added iSHM pair of the two-hidden-layer deep softplus network (PBDN-2). }
 \end{figure}
 \begin{figure}[!h]
 \begin{center}
 \includegraphics[width=0.9\columnwidth]{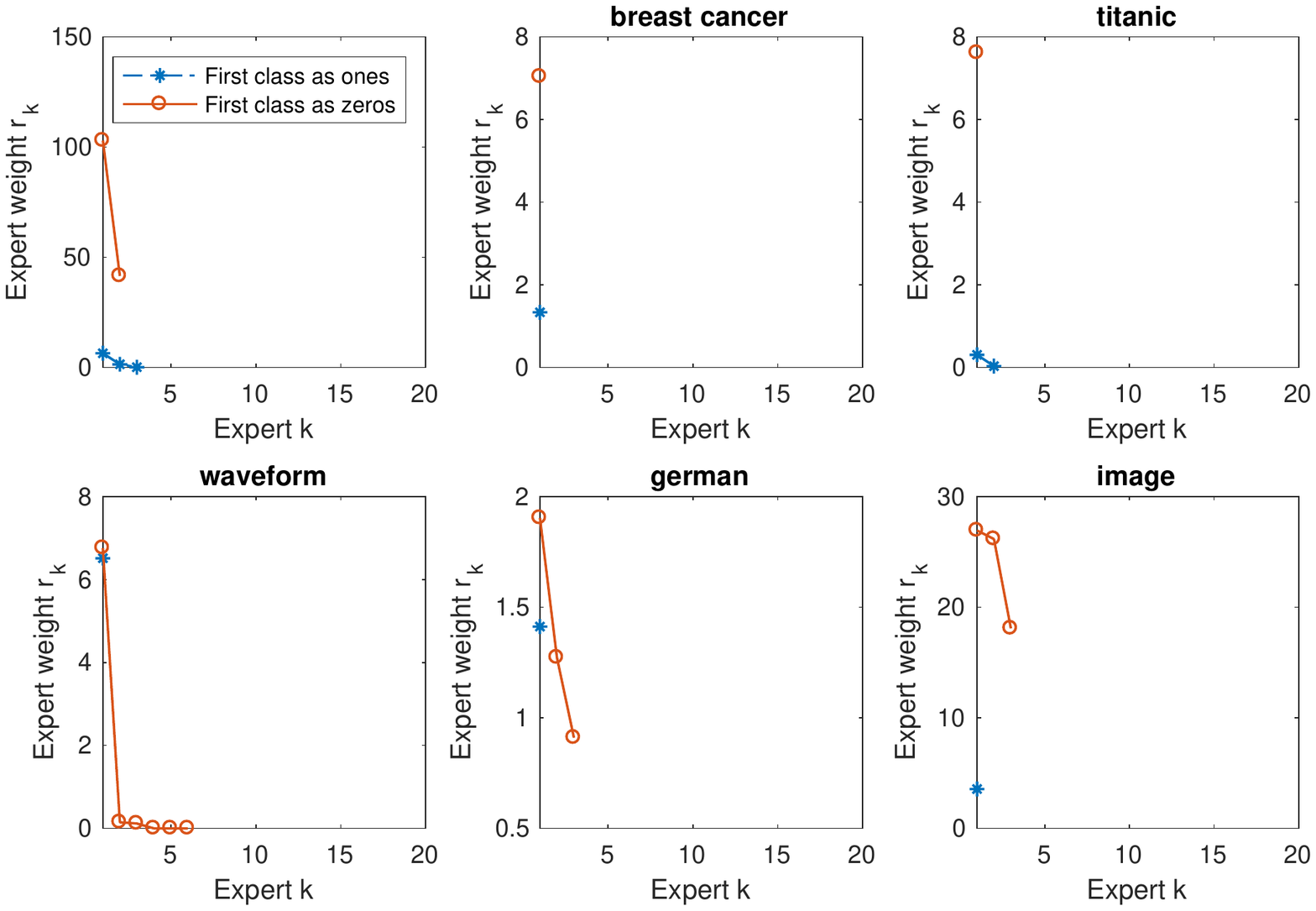}
\end{center}
\caption{\small\label{fig:SS-softplus_rk3}
Analogous to 
 Fig. \ref{fig:rk} for the most recently added iSHM pair of the three-hidden-layer deep softplus network (PBDN-3). }
\begin{center}
 \includegraphics[width=0.9\columnwidth]{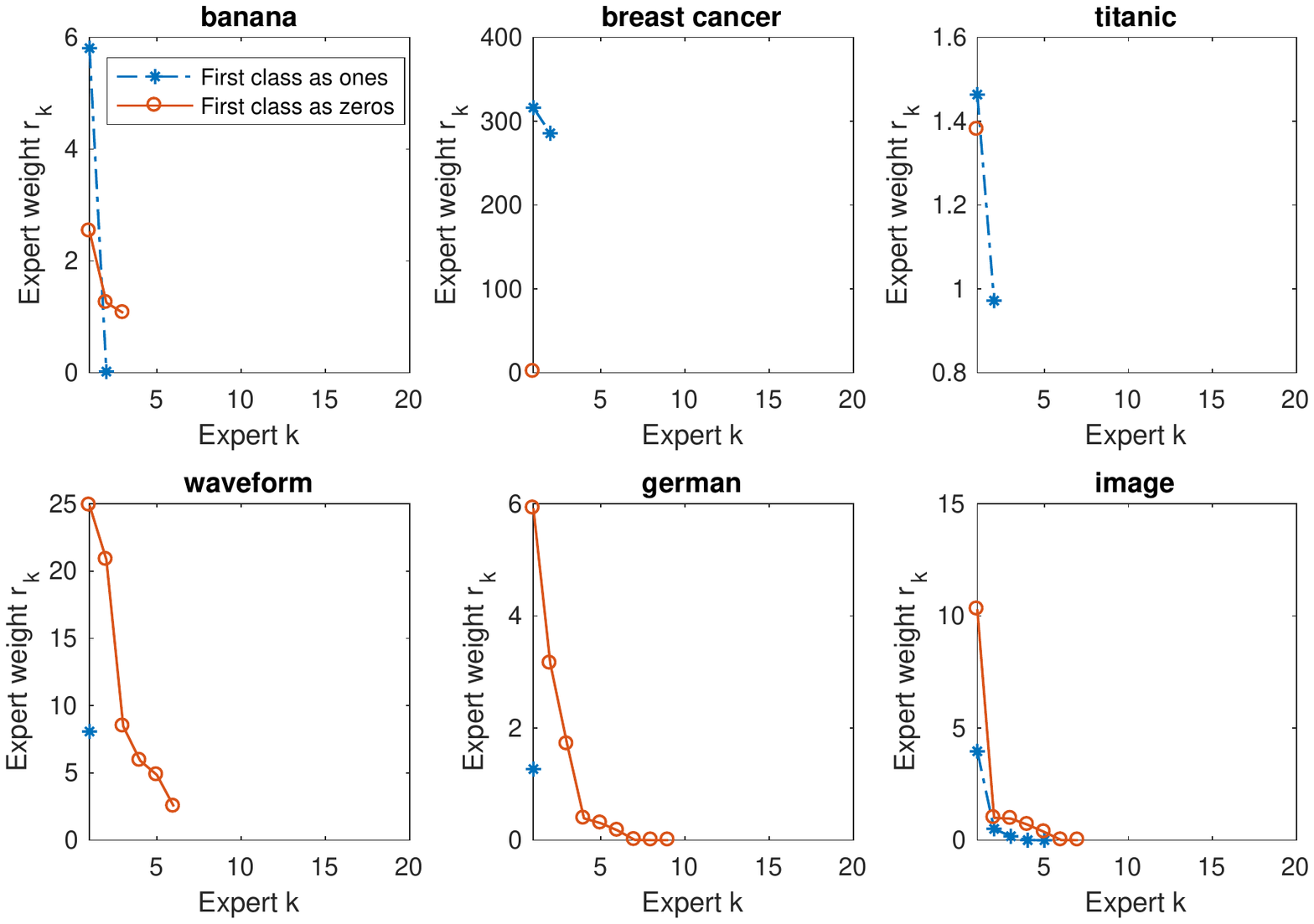}
\end{center}
\caption{\small\label{fig:SS-softplus_rk4}
Analogous to 
 Fig. \ref{fig:rk} for the most recently added iSHM pair of the four-hidden-layer deep softplus network (PBDN-4). }

\end{figure}

\end{document}